\newtheorem{theorem}{Theorem}
\newtheorem{proposition}{Proposition}
\theoremstyle{definition}
\newtheorem{definition}{Definition}
\newtheorem{example}{Example}
\DeclareFontFamily{U}{mathb}{\hyphenchar\font45}
\DeclareFontShape{U}{mathb}{m}{n}{
	<-6> mathb5 <6-7> mathb6 <7-8> mathb7
	<8-9> mathb8 <9-10> mathb9
	<10-12> mathb10 <12-> mathb12
}{}
\DeclareSymbolFont{mathb}{U}{mathb}{m}{n}
\DeclareMathSymbol{\llcurly}{\mathrel}{mathb}{"CE}
\DeclareMathSymbol{\ggcurly}{\mathrel}{mathb}{"CF}
\DeclareMathSymbol{\centerdot}{\mathrel}{mathb}{"0D}
\newcommand{\ifLatexThree}[2]{\@ifpackageloaded{xparse}{#1}{#2}}
\newcommand{\ifAMSmath}[2]{\@ifpackageloaded{amsmath}{#1}{#2}}
\newcommand{\ifMathSCR}[2]{\@ifpackageloaded{mathrsfs}{#1}{#2}}
\newcommand{\ifMathHyperREF}[2]{\@ifpackageloaded{hyperref}{#1}{#2}}
	\NewDocumentCommand{\headword}{s o m}{\IfBooleanTF{#1}{#3}{\textbf{#3}}\IfNoValueTF{#2}{\index{#3}}{\index{#2}}}%
	\def\@headword#1{\textbf{#1}\index{#1}}%
	\def\@@headword#1{#1\index{#1}}%
	\def\headword#1{\@ifstar\@headword{#1}\@@headword{#1}}%
\newcommand*{\centernot}{%
	\mathpalette\@centernot
}
\def\@centernot#1#2{%
	\mathrel{%
		\rlap{%
			\settowidth\dimen@{$\m@th#1{#2}$}%
			\kern.5\dimen@
			\settowidth\dimen@{$\m@th#1=$}%
			\kern-.5\dimen@
			$\m@th#1\not$%
		}%
		{#2}%
	}%
}
\DeclareRobustCommand\nmableitSymb{\mathrel|\mkern-.5mu\joinrel\sim} %
\newcommand{\nmableit}{\ensuremath{\mbox{$\,\nmableitSymb\,$}}} %
	\newcommand{\seqref}[1]{\hyperref[{#1}]{\textup{\tagform@split{\getrefnumber{#1}}}}}%
	\newcommand{\seqref}[1]{\textup{\tagform@split{\getrefnumber{#1}}}}%
\newcommand\tagform@split[1]{%
	\begingroup
	\m@th\normalfont(\ignorespaces #1\unskip\@@italiccorr)%
	\endgroup
}
\newcommand{\leqnomode}{\tagsleft@true\let\veqno\@@leqno}
\newcommand{\reqnomode}{\tagsleft@false\let\veqno\@@eqno}
\newcommand{\ksIF}{\text{if }}
\newcommand{\ksTHEN}{\text{, then }}
\newcommand{\ksAND}{\text{ and }}
\newcommand{\ksOR}{\text{ or }}
\newcommand{\ksIFFlong}{\text{ if and only if }}
\newcommand{\modelsOf}[1]{\ensuremath{\llbracket #1\rrbracket}}
\newcommand{\modelsOfES}[1]{\ensuremath{\| #1 \|}}
\newcommand{\negOf}[1]{{\ensuremath{\neg{#1}}}}
\newcommand{\beliefsOf}[1]{\ensuremath{Bel\left(#1\right)}}
\newcommand{\setAllES}{\ensuremath{\mathcal{E}}}
\newcommand{\propLang}{\ensuremath{\mathcal{L}}}
\newcommand{\ramseyCond}[2]{\ensuremath{(\,#1\,|\,#2\,)}}
\newcommand{\contractionCond}[2]{\ensuremath{[\,#1\,|\,#2\,]}}
\newcommand{\textlabel}[2]{%
	\protected@edef\@currentlabel{#1}%
	\phantomsection%
	#1\label{#2}%
}
\newcommand{\change}{\ensuremath{\div}}
\renewcommand{\leqnomode}{\tagsleft@true}
\renewcommand{\reqnomode}{\tagsleft@false}
\renewcommand{\beliefsOf}[1]{\ensuremath{\text{Bel}\!\left(#1\right)}}
\begin{document}
\title{A Conditional Perspective for Iterated Belief Contraction}

\author{Kai Sauerwald\institute{FernUniversität in Hagen, Germany, kai.sauerwald@fernuni-hagen.de} \and Gabriele Kern-Isberner\institute{Technical University Dortmund, Germany, gabriele.kern-isberner@cs.tu-dortmund.de}  \and Christoph Beierle\institute{FernUniversität in Hagen, Germany, christoph.beierle@fernuni-hagen.de}}

\maketitle

\begin{abstract}
	According to Boutillier, Darwiche and Pearl and others, principles for iterated revision can be characterised in terms of changing beliefs about conditionals.
	For iterated contraction a similar formulation is not known. %
This is especially because for iterated belief change the connection between revision and contraction via the Levi and Harper identity is not straightforward, and therefore, characterisation results do not transfer easily between iterated revision and contraction.
	In this article, we develop an axiomatisation of iterated contraction in terms of changing conditional beliefs. 
	We prove that the new set of postulates conforms semantically to the class of operators like the ones given by Konieczny and Pino Pérez for iterated contraction.

\end{abstract} 
\setlength{\abovedisplayskip}{5.3pt}
\setlength{\belowdisplayskip}{5.3pt}

\section{Introduction}
\label{sec:introduction}

For the three main classes of theory change, revision, expansion and contraction, different characterisations are known \cite{KS_FermeHansson2018}, which are heavily supported by the correspondence between revision and contraction via the Levi and Harper identities \cite{KS_Levi1977,KS_Harper1976}.
The situation is different for iterated belief change, focussing on belief change operators which, due to their nature, can be applied iteratively and thus, to more than one epistemic state. 
In this field, one of the most influential articles is the seminal paper \cite{KS_DarwichePearl1997} by Darwiche and Pearl (DP), establishing the insight that belief sets are not a sufficient representation for iterated belief revision.
An agent has to encode more information about her belief change strategy into her \emph{epistemic state} - where the revision strategy deeply corresponds with conditional beliefs.
This requires additional postulates that guarantee intended behaviour in forthcoming changes, especially that the possibilities of changing conditional beliefs is limited.
The common way of semantic encoding, also established by Darwiche and Pearl \cite{KS_DarwichePearl1997}, is an extension of Katsuno and Mendelzon's characterisation of the class of revisions by Alchourr{\'{o}}n, G{\"{a}}rdenfors and Makinson \cite{KS_AlchourronGaerdenforsMakinson1985} in terms of plausibility orderings \cite{KS_KatsunoMendelzon1992}, where it is assumed that the epistemic states contain an order of the worlds (or interpretations).

Similar work has been done in recent years for iterated contraction. 
Caridroit, Konieczny and Marquis \cite{KS_CaridroitKoniecznyMarquis2015} provided postulates for contraction in propositional logic and a characterisation with plausibility orders in the style of Katsuno and Mendelzon \cite{KS_KatsunoMendelzon1992}.
By this characterisation, the main characteristic of a contraction  with $ \alpha $ is  that the worlds of the previous state remain plausible and that the most plausible counter-models of $ \alpha $ become plausible.
Chopra, Ghose, Meyer and Wong \cite{KS_ChopraGhoseMeyerWong2008} transferred these to the Darwiche-Pearl framework of epistemic states, contributed semantic postulates for contraction on epistemic states in the fashion of Darwiche and Pearl, and equivalent syntactic postulates that depend on a revision function.
In the same framework, Konieczny and {Pino P{\'{e}}rez} provided additional syntactic iteration postulates for contraction which are independent from revision operators \cite{KS_KoniecznyPinoPerez2017}. %
However, none of these approaches on iterated contraction provides a focus on conditionals like the work by Darwiche and Pearl \cite{KS_DarwichePearl1997}.

In this article, we develop a new set of syntactic postulates for iterated contraction. 
These new postulates for iterated contraction are formulated in the fashion of Darwiche and Pearl.
We show that our set of postulates and the set of postulates given by Konieczny and {Pino P{\'{e}}rez} \cite{KS_KoniecznyPinoPerez2017} define the same class of contraction operators in the light of the basic postulates.
However, we argue that our new postulates highlight new aspects of iterated contraction operators.
Especially, the new postulates highlight the specific role of conditionals in the same manner as the postulates for iterative revision by Darwich and Pearl do.
For this we use specific conditionals for contraction, also called contractionals, which are studied by Bochman \cite{KS_Bochman2001}.
To develop some of the new postulates, we define an equivalence relation for epistemic states with respect to a proposition.
Furthermore, we argue that the new postulates are more succinct; dealing less with changes of disjunctive beliefs.
Succintness of postulates is of particular importance when concepts on iterated belief change developed for changes in propositional logic are translated to other formalisms \cite{KS_ShapiroPagnuccoLesperanceLevesque2011,KS_DelgrandePeppasWoltran2013,KS_DelgrandePeppas2015,KS_DelgrandePeppasWoltran2018}, and also when belief contraction is used for modelling phenomena, like forgetting (see the recent survey \cite{KS_EiterKern-Isberner2019}).

In summary the main contributions of this article are:
		\vspace{-0.2cm}
\begin{itemize}
	\item Postulates for iterated contraction and conditional beliefs 
	\item A notion of relative equivalence for epistemic states
	\item Succinct iterated contraction postulates under relative equivalence
	\item Representation theorems for the sets of postulates
\end{itemize}
\vspace{-0.2cm}

The rest of the paper is organised as follows. Section \ref{sec:prelim} provides the technical background, especially on belief change.
In Section \ref{sec:conditional_contraction}, the role of conditional beliefs is explained, and contractionals and $ \alpha $-equivalence is introduced.
In Section \ref{sec:postulates}, both new sets of postulates for iterated contraction are proposed and characterisation results for them are proven,
finally resulting in an extended representation theorem for iterated contraction. 
Section \ref{sec:conclusion} concludes and points out future work.

\section{Formal Background}
\label{sec:prelim}
We start by recalling basics of propositional logic and total preorders.
	
	\vspace{-0.2cm}
	\subsection{Propositional Logic}

Let $ \Sigma $ be a propositional signature (non empty finite set of propositional variables) and $ \propLang $ a propositional language over $ \Sigma $. 
With lower Greek letters $ \alpha,\beta,\gamma,\ldots $ we denote formulas in $ \propLang $ and with lower case letters $ a,b,c,\ldots$ propositional variables from $ \Sigma $.
	The set of 
	propositional interpretations $ \Omega $, also called set of worlds, is identified with the set of corresponding complete conjunctions over $ \Sigma $.
Propositional entailment is denoted by $ \models $, the set of models of $ \alpha $ with $ \modelsOf{\alpha} $, and $ Cn(\alpha)=\{ \beta\mid \alpha\models \beta \} $ is the deductive closure of $ \alpha $.
For a set $ X $ we define $ Cn(X)=\{ \beta \mid X\models\beta \} $. 
For a set of worlds $ \Omega'\subseteq \Omega $ and  a total preorder $ \leq $ (total, reflexive and transitive relation) over $ \Omega $, we denote with $ \min(\Omega',\leq)=\{ \omega\mid  \omega\in\Omega' \text{ and } \forall \omega'\in\Omega'\ \omega\leq \omega' \} $ the set of all worlds in the lowest layer of $ \leq $ that are elements in $ \Omega' $.
For a total preorder $ \leq $, we denote with $ < $ its strict variant, i.e. $ x < y $ iff $ x \leq y $ and $ y \not\leq x $; 
and we write $ x \simeq y $ iff $ x \leq y $ and $ y\leq x $.

\subsection{Epistemic States and Belief Changes}
AGM theory\cite{KS_AlchourronGaerdenforsMakinson1985}, by Alchourr{\'{o}}n, G{\"{a}}rdenfors and Makinson, deals with belief change in the context of belief sets, i.e., deductively closed sets of
propositions. 
In contrast, the area of iterated belief change abstracts from a belief set to an \headword{epistemic state}%
, sometimes also called belief state, in which the agent maintains all necessary information for her belief apparatus. 
With $ \setAllES $ we denote the set of all epistemic states over $ \propLang $.
Without defining what an epistemic state is, we assume that for every epistemic state $ \Psi\in\setAllES $ we can obtain the set of plausible sentences $ \beliefsOf{\Psi}\subseteq \mathcal{L} $ of $ \Psi $, which is deductively closed.
We write $ \Psi\models\alpha $ iff $ \alpha\in\beliefsOf{\Psi} $ and we define $ \modelsOfES{\Psi}=\{ \omega \mid \omega\models \alpha \text{ for each } \alpha\in\beliefsOf{\Psi} \} $.
A 
belief change operator over $\mathcal{L}$ is a function $ \circ : \setAllES \times \propLang \to 
\setAllES $.

Katsuno and Mendelzon \cite{KS_KatsunoMendelzon1992} propose that an epistemic state $ \Psi $ should be equipped with an ordering $ \leq_{\Psi} $ of the worlds (interpretations),
where the compatibility with $ \beliefsOf{\Psi} $ is ensured by the so-called faithfulness. 
\begin{definition}[Faithful Assignment \cite{KS_KatsunoMendelzon1992}]\label{def:faithful_assignment}
	A function $ \Psi\mapsto \leq_\Psi $ that maps each epistemic state to a total preorder on interpretations is said to be a faithful assignment if and only if:
	\leqnomode
\begin{align*}
& \ksIF \omega_1 \in \modelsOfES{\Psi} \ksAND \omega_2 \in \modelsOfES{\Psi} \ksTHEN \omega_1 \simeq_\Psi \omega_2  \tag{FA1}\label{pstl:FA1} \\
& \ksIF \omega_1 \in \modelsOfES{\Psi} \ksAND \omega_2\notin\modelsOfES{\Psi} \ksTHEN \omega_1 <_\Psi \omega_2 \tag{FA2}\label{pstl:FA2}
\end{align*}
\end{definition}
	Intuitively, $ \leq_{\Psi} $ orders the worlds by plausibility, such that the minimal worlds with respect to $ \leq_{\Psi} $ are the most plausible worlds.

\subsection{Iterated Revision}

	Revision deals with the problem of incorporating new beliefs into an agents belief set, thereby maintaining consistency.
	The well-known approach to revision given by AGM \cite{KS_AlchourronGaerdenforsMakinson1985}, which implements the principle of minimal change, has a counterpart in the framework of epistemic states.
\begin{proposition}[AGM Revision for Epistemic States {\cite{KS_DarwichePearl1997}}]\label{prop:es_revision}
	A belief change operator $ * $ is an \emph{AGM revision operator for epistemic states} if there is a faithful assignment $ \Psi\mapsto \leq_\Psi $ such that:
	\begin{equation}\label{eq:repr_es_revision}
	\modelsOfES{\Psi * \alpha} = \min(\modelsOf{\alpha},\leq_{\Psi})
	\end{equation}
\end{proposition}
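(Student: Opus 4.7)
The plan is to treat this as a two-directional representation theorem in the style of Katsuno and Mendelzon, lifted from belief sets to epistemic states. Although the statement is phrased with ``if'', the intended result following Darwiche and Pearl is a full characterisation, so I would prove both that any $*$ defined via \eqref{eq:repr_es_revision} relative to some faithful assignment satisfies the AGM postulates for epistemic states, and conversely that every such operator admits at least one such faithful assignment.

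For the soundness direction, I would take the AGM postulates for epistemic states (success, vacuity, consistency preservation, syntax irrelevance, and the two supplementary postulates) and verify each by unfolding \eqref{eq:repr_es_revision}. Success, consistency preservation, and syntax irrelevance are immediate from basic properties of $\modelsOf{\alpha}$ and of $\min$. Vacuity---when $\beliefsOf{\Psi}\cup\{\alpha\}$ is consistent, revision coincides with expansion---is exactly where postulate~\eqref{pstl:FA2} earns its keep: models of $\beliefsOf{\Psi}$ strictly precede non-models under $\leq_\Psi$, so if $\modelsOfES{\Psi}\cap\modelsOf{\alpha}\neq\emptyset$, then $\min(\modelsOf{\alpha},\leq_\Psi)$ collapses to $\modelsOfES{\Psi}\cap\modelsOf{\alpha}$. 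The two supplementary postulates follow from a standard lemma about minima of total preorders restricted to subsets.

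For completeness, given an AGM revision operator $*$, I would construct $\leq_\Psi$ in the standard way: for two worlds $\omega_1,\omega_2$, set $\omega_1 \leq_\Psi \omega_2$ iff $\omega_1\in\modelsOfES{\Psi*\alpha_{12}}$, where $\alpha_{12}$ denotes a formula whose models are exactly $\{\omega_1,\omega_2\}$. Totality and reflexivity are immediate from success, since the minimum over the nonempty $\{\omega_1,\omega_2\}$ is nonempty. Faithfulness follows from vacuity: when $\omega_1\in\modelsOfES{\Psi}$, revision by $\alpha_{12}$ coincides with expansion, forcing $\omega_1$ into the revised state. Equation~\eqref{eq:repr_es_revision} for a general $\alpha$ is then extracted by applying the supplementary postulates to $\alpha$ and to the disjunctions of pairs of its models.

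The main obstacle is transitivity of the constructed preorder: deriving $\omega_1\leq_\Psi \omega_3$ from $\omega_1\leq_\Psi \omega_2$ and $\omega_2\leq_\Psi \omega_3$. The cleanest route is to inspect $\modelsOfES{\Psi*\alpha_{123}}$ for the three-way disjunction, argue by cases on which subset of $\{\omega_1,\omega_2,\omega_3\}$ this set contains, and propagate information between the two-way revisions $\Psi*\alpha_{ij}$ and the three-way revision via the two supplementary postulates. Once transitivity is in hand, the remaining steps are routine bookkeeping relating $\leq_\Psi$, the postulates, and $\modelsOfES{\cdot}$.
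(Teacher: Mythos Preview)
The paper does not supply its own proof of this proposition: it is quoted as background from Darwiche and Pearl \cite{KS_DarwichePearl1997} and left unproven, so there is no in-paper argument to compare your proposal against. In fact, in this article the proposition functions partly as a \emph{definition}---note that the AGM revision postulates for epistemic states are never listed (in contrast to \eqref{pstl:C1}--\eqref{pstl:C7} for contraction), so the semantic condition \eqref{eq:repr_es_revision} is effectively taken as what ``AGM revision operator for epistemic states'' means here.

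That said, your sketch is the standard Katsuno--Mendelzon argument lifted to epistemic states and is correct in outline. One small remark: your construction of $\leq_\Psi$ via pairwise revisions yields a relation that is automatically total and reflexive, and your plan for transitivity via the three-world disjunction and the supplementary postulates is exactly the usual route; just be aware that in the epistemic-state setting one also needs the operator to respect equivalence of inputs (your ``syntax irrelevance'') so that the choice of formula $\alpha_{12}$ representing $\{\omega_1,\omega_2\}$ does not matter. Nothing in your plan is wrong, but since the paper treats the result as imported, a one-line citation would suffice in context.
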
	
\noindent  Driven by the insight that iteration needs additional constraints, Darwiche and Pearl proposed the following postulates:
\begingroup\leqnomode\begin{align*}
& \ksIF \alpha\models\mu \ksTHEN \beliefsOf{\Psi * \mu *\alpha} = \beliefsOf{\Psi * \alpha} 
\tag{DP1}\label{pstl:DP1} \\
& \ksIF \alpha\models\negOf{\mu} \ksTHEN \beliefsOf{\Psi * \mu *\alpha} = \beliefsOf{\Psi * \alpha}
\tag{DP2}\label{pstl:DP2} \\
& \ksIF \Psi*\alpha \models \mu \ksTHEN (\Psi * \mu) *\alpha \models \mu
\tag{DP3}\label{pstl:DP3} \\
& \ksIF \Psi*\alpha \not\models \negOf{\mu} \ksTHEN (\Psi * \mu) *\alpha \not\models \negOf{\mu}
\tag{DP4}\label{pstl:DP4} 
\end{align*}
It is well-known that these operators can be characterised in the semantic framework of total preorders.
\begin{proposition}[Iterated Revision{\cite{KS_DarwichePearl1997}}]\label{prop:it_es_revision}
	Let $ * $ be an AGM revision operator for epistemic states. Then $ * $ satisfies \eqref{pstl:DP1} to \eqref{pstl:DP4} if and only there exists a faithful assignment $ \Psi\mapsto\leq_{\Psi} $ such that \eqref{eq:repr_es_revision} and the following postulates are satisfied:
	\begingroup
\begin{description}
	\item[\normalfont(\textlabel{RR8}{pstl:RR8})] \( \ksIF \omega_1,\omega_2 \in \modelsOf{\alpha} \ksTHEN \omega_1 \!\leq_{\Psi}\! \omega_2 \Leftrightarrow \omega_1 \!\leq_{\Psi * \alpha}\! \omega_2 \) 
	\smallskip
	\item[\normalfont(\textlabel{RR9}{pstl:RR9})] \( \ksIF \omega_1,\omega_2 \in \modelsOf{\negOf{\alpha}} \ksTHEN \omega_1 \!\leq_{\Psi}\! \omega_2 \Leftrightarrow \omega_1 \!\leq_{\Psi * \alpha}\! \omega_2 \)
	\smallskip
	\item[\normalfont(\textlabel{RR10}{pstl:RR10})] \( \ksIF \omega_1 \!\in\! \modelsOf{\alpha} \ksAND \omega_2 \!\in\! \modelsOf{\negOf{\alpha}}  \ksTHEN    \omega_1 \!<_{\Psi}\! \omega_2 \! \Rightarrow \! \omega_1 \!<_{\Psi * \alpha}\! \omega_2 \)
	\smallskip
	\item[\normalfont(\textlabel{RR11}{pstl:RR11})] \( \ksIF \omega_1  \!\in\! \modelsOf{\alpha} \ksAND \omega_2 \!\in\! \modelsOf{\negOf{\alpha}}  \ksTHEN    \omega_1 \!\leq_{\Psi}\! \omega_2 \! \Rightarrow \! \omega_1 \!\leq_{\Psi * \alpha}\! \omega_2 \)
\end{description}
	\endgroup
\end{proposition}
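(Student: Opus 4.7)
The plan is to reduce the full biconditional to four separate equivalences, \eqref{pstl:DP1} $\Leftrightarrow$ \eqref{pstl:RR8}, \eqref{pstl:DP2} $\Leftrightarrow$ \eqref{pstl:RR9}, \eqref{pstl:DP3} $\Leftrightarrow$ \eqref{pstl:RR10}, and \eqref{pstl:DP4} $\Leftrightarrow$ \eqref{pstl:RR11}, each proven under the standing assumption that the faithful assignment $\Psi \mapsto \leq_\Psi$ furnished by Proposition~\ref{prop:es_revision} satisfies $\modelsOfES{\Psi * \alpha} = \min(\modelsOf{\alpha}, \leq_\Psi)$. The key tool throughout is that, because $\Sigma$ is finite, for any pair of worlds $\omega_1, \omega_2$ we can always choose a formula $\beta$ with $\modelsOf{\beta} = \{\omega_1, \omega_2\}$, so that $\modelsOfES{\Psi * \beta} = \min(\{\omega_1, \omega_2\}, \leq_\Psi)$ already pins down the relative order of $\omega_1$ and $\omega_2$ under $\leq_\Psi$. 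This lets me translate between semantic statements about $\leq$ and syntactic statements about the outputs of $*$.

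First I would treat \eqref{pstl:DP1} $\Leftrightarrow$ \eqref{pstl:RR8}. Rewriting \eqref{pstl:DP1} via the representation equation turns it into the following semantic condition: whenever $\alpha \models \mu$, $\min(\modelsOf{\alpha}, \leq_{\Psi * \mu}) = \min(\modelsOf{\alpha}, \leq_\Psi)$. The direction ($\Leftarrow$) is immediate, because \eqref{pstl:RR8} makes $\leq_\Psi$ and $\leq_{\Psi * \mu}$ agree on $\modelsOf{\mu}$ and hence on every subset, in particular on $\modelsOf{\alpha}$. For ($\Rightarrow$), I pick any $\omega_1, \omega_2 \in \modelsOf{\mu}$ and apply \eqref{pstl:DP1} with a formula $\alpha$ whose models are exactly $\{\omega_1, \omega_2\}$; the resulting equality of two-element minima forces $\omega_1 \leq_\Psi \omega_2 \Leftrightarrow \omega_1 \leq_{\Psi * \mu} \omega_2$. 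The case \eqref{pstl:DP2} $\Leftrightarrow$ \eqref{pstl:RR9} is entirely parallel, with $\mu$ replaced by $\negOf{\mu}$ in the inclusion condition.

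For \eqref{pstl:DP3} $\Leftrightarrow$ \eqref{pstl:RR10} the extra ingredient is the semantic reading of the hypothesis $\Psi * \alpha \models \mu$: by the representation equation it is equivalent to $\min(\modelsOf{\alpha}, \leq_\Psi) \subseteq \modelsOf{\mu}$, i.e., for every $\omega_2 \in \modelsOf{\alpha \wedge \negOf{\mu}}$ there exists $\omega_1 \in \modelsOf{\alpha \wedge \mu}$ with $\omega_1 <_\Psi \omega_2$. In ($\Leftarrow$), \eqref{pstl:RR10} lifts each such strict inequality to $<_{\Psi * \mu}$, so no $\mu$-refuting $\alpha$-world can remain minimal after revision by $\mu$, yielding $(\Psi * \mu) * \alpha \models \mu$. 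In ($\Rightarrow$), given $\omega_1 \models \mu$, $\omega_2 \models \negOf{\mu}$ with $\omega_1 <_\Psi \omega_2$, I apply \eqref{pstl:DP3} to the formula $\alpha$ with $\modelsOf{\alpha} = \{\omega_1, \omega_2\}$: then $\Psi * \alpha \models \mu$, so \eqref{pstl:DP3} delivers $(\Psi * \mu) * \alpha \models \mu$, which forces $\omega_2 \notin \min(\{\omega_1, \omega_2\}, \leq_{\Psi * \mu})$, i.e., $\omega_1 <_{\Psi * \mu} \omega_2$. The case \eqref{pstl:DP4} $\Leftrightarrow$ \eqref{pstl:RR11} proceeds analogously, but starting from the semantic reading of $\Psi * \alpha \not\models \negOf{\mu}$ as the existence of some minimal $\alpha$-world satisfying $\mu$, and using non-strict inequalities in place of strict ones.

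The main difficulty is not depth but careful bookkeeping. One has to track which of the two variables in each (DP) postulate plays the role of the revision formula in the matching (RR) postulate, and, for \eqref{pstl:DP4}, verify that the weak condition $\not\models \negOf{\mu}$ really does correspond to the non-strict implication in \eqref{pstl:RR11} rather than to its strict strengthening. Once those correspondences are fixed, each of the eight sub-implications collapses to the same two-world gadget used in the first step.
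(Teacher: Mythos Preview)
Your proposal is sound, but note that the paper does not actually prove Proposition~\ref{prop:it_es_revision}: it is stated as a background result cited from Darwiche and Pearl~\cite{KS_DarwichePearl1997}, with no proof given. So there is no ``paper's own proof'' to compare against.

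That said, what you sketch is exactly the standard argument from the original Darwiche--Pearl paper: each \eqref{pstl:DP1}--\eqref{pstl:DP4} is shown individually equivalent to the corresponding \eqref{pstl:RR8}--\eqref{pstl:RR11} via the two-world trick (choosing $\alpha$ with $\modelsOf{\alpha}=\{\omega_1,\omega_2\}$ so that $\min(\modelsOf{\alpha},\leq)$ reveals the relative order). The paper itself uses the very same device later, in the proofs of Propositions~\ref{prop:eqrelated_c8_c9} and~\ref{prop:eqrelated_c10_c11}, for the contraction analogues \eqref{pstl:CR8}--\eqref{pstl:CR11}. Your bookkeeping remarks about which variable plays which role, and about the non-strict/strict distinction in \eqref{pstl:DP4} versus \eqref{pstl:RR11}, are the right things to watch; in particular, the $(\Leftarrow)$ direction for \eqref{pstl:DP4} goes through by taking a minimal $\mu$-world $\omega_1$ in $\modelsOf{\alpha}$ under $\leq_\Psi$, supposing it is beaten under $\leq_{\Psi*\mu}$ by some $\negOf{\mu}$-world $\omega_2$, and applying the contrapositive of \eqref{pstl:RR11} to derive $\omega_2 <_\Psi \omega_1$, contradicting minimality. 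No additional use of \eqref{pstl:RR8} is needed there, so the four equivalences really are independent, as you claim.
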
%

\subsection{Iterated Contraction}

Contraction is the problem  of withdrawing beliefs.
Postulates for AGM contraction in the framework of epistemic states where given by Chopra, Ghose, Meyer and  Wong \cite{KS_ChopraGhoseMeyerWong2008}, and by Caridroit, Konieczny and Marquis \cite{KS_CaridroitKoniecznyMarquis2015} for propositional formula. 
Here, we give the formulation by Chropra et al. \cite{KS_ChopraGhoseMeyerWong2008}:
\begin{description}
	\item[\normalfont(\textlabel{C1}{pstl:C1})] \( \beliefsOf{\Psi \change \alpha} \subseteq \beliefsOf{\Psi}  \) 
	\smallskip
	\item[\normalfont(\textlabel{C2}{pstl:C2})] \( \ksIF  \alpha\notin\beliefsOf{\Psi} \ksTHEN \beliefsOf{\Psi}\subseteq \beliefsOf{\Psi \change\alpha} \) 
	\smallskip
	\item[\normalfont(\textlabel{C3}{pstl:C3})] \( \ksIF \alpha \not\equiv \top \ksTHEN  \alpha \notin \beliefsOf{\Psi  \change  \alpha} \) 
	\smallskip
	\item[\normalfont(\textlabel{C4}{pstl:C4})] \( \beliefsOf{\Psi} \subseteq Cn(\beliefsOf{\Psi  \change  \alpha} \cup \{\alpha\}) \) 
	\smallskip
	\item[\normalfont(\textlabel{C5}{pstl:C5})] \( \ksIF \alpha \equiv\beta \ksTHEN \beliefsOf{\Psi  \change  \alpha} = \beliefsOf{\Psi  \change  \beta} \) 
	\smallskip
	\item[\normalfont(\textlabel{C6}{pstl:C6})] \( \beliefsOf{\Psi \change \alpha} \cap \beliefsOf{\Psi \change \beta} \subseteq \beliefsOf{\Psi  \change  (\alpha\land\beta)} \) 
	\smallskip
	\item[\normalfont(\textlabel{C7}{pstl:C7})] \( \ksIF \beta\!\notin\!\beliefsOf{\!\Psi \!  \change \! (\alpha\!\land\!\beta)\!}  \ksTHEN \beliefsOf{\Psi \! \change \! (\alpha\!\land\!\beta)} \subseteq \beliefsOf{\Psi\! \change \!\beta} \) 
\end{description}
For an explanation of these postulates we refer to the article by Caridroit et al. \cite{KS_CaridroitKoniecznyMarquis2015}.
A characterisation in terms of total preorders on epistemic states is given by the following proposition.
\begin{proposition}[AGM Contraction for Epistemic States {\cite{KS_KoniecznyPinoPerez2017}}]\label{prop:es_contraction}
A belief change operator $ \change $ fulfils the postulates \eqref{pstl:C1} to \eqref{pstl:C7} if and only if there is a faithful assignment $ \Psi\mapsto \leq_\Psi $ such that:
\begin{equation}\label{eq:repr_es_contraction}
\modelsOfES{\Psi \change \alpha} = \modelsOfES{\Psi} \cup \min(\modelsOf{\negOf{\alpha}},\leq_{\Psi})
\end{equation}
\end{proposition}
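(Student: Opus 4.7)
The proof proposal splits naturally into two directions, a soundness direction ($\Leftarrow$) where one verifies the AGM postulates from the semantic equation, and a completeness direction ($\Rightarrow$) where one has to construct a faithful assignment out of the operator $\change$.

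For the soundness direction, I would take any faithful assignment with $\modelsOfES{\Psi \change \alpha} = \modelsOfES{\Psi} \cup \min(\modelsOf{\negOf{\alpha}},\leq_{\Psi})$ and verify \eqref{pstl:C1}--\eqref{pstl:C7} one by one. \eqref{pstl:C1} is immediate because $\modelsOfES{\Psi \change \alpha} \supseteq \modelsOfES{\Psi}$. For \eqref{pstl:C2}, if $\alpha \notin \beliefsOf{\Psi}$ then some $\omega \in \modelsOfES{\Psi}$ satisfies $\negOf{\alpha}$, and by \eqref{pstl:FA2} this $\omega$ is minimal in $\modelsOf{\negOf\alpha}$, so the min-set is absorbed into $\modelsOfES{\Psi}$. \eqref{pstl:C3} follows because $\alpha\not\equiv\top$ implies $\modelsOf{\negOf\alpha}\neq\emptyset$ and hence the union contains a counter-model of $\alpha$. \eqref{pstl:C4}, the recovery clause, is handled by intersecting with $\modelsOf{\alpha}$: the min-set disappears and $\modelsOfES{\Psi\change\alpha}\cap\modelsOf{\alpha}=\modelsOfES{\Psi}\cap\modelsOf{\alpha}\subseteq\modelsOfES{\Psi}$. \eqref{pstl:C5} is trivial. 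For \eqref{pstl:C6} and \eqref{pstl:C7}, I would reason on the min operator using the totality of $\leq_\Psi$, distinguishing the cases $\min(\modelsOf{\negOf\alpha},\leq_\Psi) \leq \min(\modelsOf{\negOf\beta},\leq_\Psi)$ and vice versa, a standard argument used for AGM revision.

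For the completeness direction, I would construct $\leq_\Psi$ from $\change$ via the canonical definition: for worlds $\omega_1,\omega_2$, identified with their complete conjunctions, set
\[
\omega_1 \leq_\Psi \omega_2 \iff \omega_1 \in \modelsOfES{\Psi} \text{ or } \omega_1 \in \modelsOfES{\Psi\change\negOf{(\omega_1\lor\omega_2)}},
\]
and reflexively extend when $\omega_1=\omega_2$. Totality follows from a case split using \eqref{pstl:C2} and \eqref{pstl:C3}: if neither $\omega_1$ nor $\omega_2$ lies in $\modelsOfES{\Psi\change\negOf{(\omega_1\lor\omega_2)}}$, then by \eqref{pstl:C4} the remaining models of the contraction satisfy $\negOf{(\omega_1\lor\omega_2)}$, contradicting \eqref{pstl:C3} applied to $\negOf{(\omega_1\lor\omega_2)}$. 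Faithfulness \eqref{pstl:FA1} and \eqref{pstl:FA2} follows directly from the definition together with \eqref{pstl:C1} and \eqref{pstl:C2}. Transitivity is derived by considering contractions with respect to $\negOf{(\omega_1\lor\omega_2\lor\omega_3)}$ and reducing to pairwise contractions via \eqref{pstl:C6} and \eqref{pstl:C7}.

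Finally, I would verify the semantic equation $\modelsOfES{\Psi\change\alpha}=\modelsOfES{\Psi}\cup\min(\modelsOf{\negOf\alpha},\leq_\Psi)$ by double inclusion. The inclusion $\supseteq$ uses \eqref{pstl:C1} for the $\modelsOfES{\Psi}$ part, and for the min-part one traces the definition of $\leq_\Psi$ through a judicious application of \eqref{pstl:C6} and \eqref{pstl:C7} that links contraction by $\negOf{(\omega_1\lor\omega_2)}$ with contraction by $\alpha$. The reverse inclusion $\subseteq$ splits any $\omega\in\modelsOfES{\Psi\change\alpha}$ into those satisfying $\alpha$ (handled by \eqref{pstl:C4}, which forces $\omega \in \modelsOfES{\Psi}$) and those satisfying $\negOf\alpha$ (handled by showing via the definition of $\leq_\Psi$ and \eqref{pstl:C6}/\eqref{pstl:C7} that such $\omega$ must be minimal). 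The main obstacle, as usual for representation theorems of Katsuno--Mendelzon flavour, lies in transitivity of $\leq_\Psi$ and this last inclusion; both reduce to combinatorial arguments on how \eqref{pstl:C6} and \eqref{pstl:C7} propagate minimality under conjunction of contraction inputs.
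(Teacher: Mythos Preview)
The paper does not prove this proposition at all: it is stated as a background result and attributed to Konieczny and Pino P{\'{e}}rez \cite{KS_KoniecznyPinoPerez2017} (building on the propositional characterisation of Caridroit, Konieczny and Marquis \cite{KS_CaridroitKoniecznyMarquis2015}). There is therefore nothing in the paper to compare your argument against; the authors simply invoke the representation and move on.

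That said, your outline is the standard Katsuno--Mendelzon style argument adapted to contraction, and it is essentially how the result is established in the cited sources. The soundness checks you sketch for \eqref{pstl:C1}--\eqref{pstl:C5} are correct; for \eqref{pstl:C6} and \eqref{pstl:C7} the case split on which of $\min(\modelsOf{\negOf{\alpha}},\leq_\Psi)$ and $\min(\modelsOf{\negOf{\beta}},\leq_\Psi)$ sits lower is indeed the right mechanism, noting that $\modelsOf{\negOf{(\alpha\wedge\beta)}}=\modelsOf{\negOf{\alpha}}\cup\modelsOf{\negOf{\beta}}$. For completeness, your canonical definition of $\leq_\Psi$ via contractions by $\negOf{(\omega_1\lor\omega_2)}$ is the usual one, and your identification of transitivity and the $\subseteq$ inclusion as the places where \eqref{pstl:C6}/\eqref{pstl:C7} do the work is accurate. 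One small slip: in your totality argument you should appeal to \eqref{pstl:C3} directly (contraction by a non-tautology must drop the formula), not to \eqref{pstl:C4}; and in the $\subseteq$ inclusion, the case $\omega\models\alpha$ is handled already by \eqref{pstl:C1} together with the recovery-style intersection you used for \eqref{pstl:C4}, rather than by \eqref{pstl:C4} alone. These are minor; the overall strategy is sound.
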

\noindent For the purpose of the article we will say a belief change operator $ \change $ is an \emph{AGM contraction operator for epistemic states}, if $ \change $ fulfils \eqref{pstl:C1} to \eqref{pstl:C7}.
The postulates \eqref{pstl:C1} to \eqref{pstl:C7} do not explicitly state how one should maintain the contraction strategy in the case of iteration. 
It is desirable to support AGM contraction operators for epistemic states by additional postulates for iteration. 
Konieczny and Pino P{\'{e}}rez give postulates\footnote{The original formulation \cite{KS_KoniecznyPinoPerez2017} uses a formula $ B(\Psi) $ instead of a belief set $ \beliefsOf{\Psi} $.} for intended iteration behaviour of contraction \cite{KS_KoniecznyPinoPerez2017}:
\begin{description}
	\item[\normalfont(\textlabel{IC8}{pstl:KPP8})] \( \ksIF \!\negOf{\alpha} \!\models\! \gamma \text{, then} 
	\left(\begin{aligned} &
	\beliefsOf{\!\Psi \!\change\! \alpha} \!\subseteq\!   \beliefsOf{\!\Psi \!\change\! (\alpha\!\lor\!\beta)} \\
\!\Leftrightarrow\ 
	&%
	\beliefsOf{\!\Psi \!\change\! \gamma \!\change\! \alpha} \!\subseteq\!   \beliefsOf{\!\Psi \!\change\! \gamma \change (\alpha\!\lor\!\beta)}
	\end{aligned}
	\right) \) 
	\smallskip
	\item[\normalfont(\textlabel{IC9}{pstl:KPP9})] \( \ksIF \gamma  \!\models\! \alpha \text{, then}
	\left(\begin{aligned}
	& \beliefsOf{\!\Psi \!\change\! \alpha} \!\subseteq\!  \beliefsOf{\!\Psi \!\change\! (\alpha\lor\beta)} \\
\!\Leftrightarrow\ 
	&%
	\beliefsOf{\!\Psi \!\change\! \gamma \!\change\! \alpha} \!\!\subseteq\!  \beliefsOf{\!\Psi \!\change\! \gamma \!\change\! (\alpha\lor\beta)}
	\end{aligned}\right) \) 
	\smallskip
	\item[\normalfont(\textlabel{IC10}{pstl:KPP10})] \( \ksIF \negOf{\beta}\!\models\! \gamma \text{, then}\! 
	\left(
	\begin{aligned}
	& \beliefsOf{\Psi \!\change\! \gamma \!\change\! \alpha} \!\subseteq\!  \beliefsOf{\Psi \!\change\! \gamma \!\change\! (\alpha\lor\beta)} \\
	&   \! \Rightarrow \beliefsOf{\Psi \!\change\! \alpha} \!\subseteq\! \beliefsOf{\Psi \!\change\! (\alpha\lor\beta)}
	\end{aligned}
	\right) \) 
	\smallskip
	\item[\normalfont(\textlabel{IC11}{pstl:KPP11})] \( \ksIF \gamma \!\models\! \beta \text{, then} 
	\left(
	\begin{aligned}
	& \beliefsOf{\Psi \!\change\! \gamma \!\change\! \alpha} \!\subseteq\!  \beliefsOf{\Psi \!\change\! \gamma \!\change\! (\alpha\lor\beta)} \\
	&\! \Rightarrow \beliefsOf{\Psi \!\change\! \alpha} \!\subseteq\!  \beliefsOf{\Psi \!\change\! (\alpha\lor\beta)}
	\end{aligned}
	\right) \) 
\end{description}

\noindent For an explanation of \eqref{pstl:KPP8} to \eqref{pstl:KPP11} we refer to Konieczny and Pino P{\'{e}}rez \cite{KS_KoniecznyPinoPerez2017}. The class of operators fulfilling these postulates is captured semantically by the following representation theorem.
\begin{proposition}[Iterated Contraction{\cite{KS_KoniecznyPinoPerez2017}}]\label{prop:it_es_contraction}
Let $ \change $ be an AGM contraction operator for epistemic states. Then $ \change $ satisfies \eqref{pstl:KPP8} to \eqref{pstl:KPP11} if and only there exists a faithful assignment $ \Psi\mapsto\leq_{\Psi} $ such that \eqref{eq:repr_es_contraction} and the following postulates are satisfied:
\vspace{-\medskipamount}
\begin{description}
	\item[\normalfont(\textlabel{CR8}{pstl:CR8})] \( \ksIF \omega_1,\omega_2 \in \modelsOf{\alpha} \ksTHEN \omega_1 \leq_{\Psi} \omega_2 \Leftrightarrow \omega_1 \leq_{\Psi\change\alpha} \omega_2 \)
	\smallskip
	\item[\normalfont(\textlabel{CR9}{pstl:CR9})] \( \ksIF \omega_1,\omega_2 \in \modelsOf{\negOf{\alpha}} \ksTHEN \omega_1 \leq_{\Psi} \omega_2 \Leftrightarrow \omega_1 \leq_{\Psi\change\alpha} \omega_2 \)
	\smallskip
	\item[\normalfont(\textlabel{CR10}{pstl:CR10})] \( \ksIF \omega_1\!\in\!\modelsOf{\negOf{\alpha}} \ksAND \omega_2\!\in\!\modelsOf{\alpha}  \ksTHEN    \omega_1 \!<_{\Psi}\! \omega_2 \!\Rightarrow\! \omega_1 \!<_{\Psi\change\alpha}\! \omega_2 \)
	\smallskip
	\item[\normalfont(\textlabel{CR11}{pstl:CR11})] \( \ksIF \omega_1\!\in\!\modelsOf{\negOf{\alpha}} \ksAND \omega_2\!\in\!\modelsOf{\alpha}  \ksTHEN    \omega_1 \!\leq_{\Psi}\! \omega_2 \!\Rightarrow\! \omega_1 \!\leq_{\Psi\change\alpha}\! \omega_2 \)
\end{description}
\end{proposition}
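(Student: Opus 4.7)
My plan is a standard representation-theorem proof in two directions, leveraging the semantic identity $\modelsOfES{\Psi \change \alpha} = \modelsOfES{\Psi} \cup \min(\modelsOf{\negOf{\alpha}}, \leq_\Psi)$ from Proposition~\ref{prop:es_contraction}. Since $\change$ is given to be an AGM contraction operator for epistemic states, a faithful assignment $\Psi \mapsto \leq_\Psi$ satisfying \eqref{eq:repr_es_contraction} exists, and the theorem then reduces to showing that the iteration postulates \eqref{pstl:KPP8}--\eqref{pstl:KPP11} are equivalent to the order-preservation conditions \eqref{pstl:CR8}--\eqref{pstl:CR11} holding for some such assignment.

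For soundness, I would rewrite each syntactic inclusion $\beliefsOf{\Psi \change \phi_1} \subseteq \beliefsOf{\Psi \change \phi_2}$ as its model-set dual $\modelsOfES{\Psi \change \phi_2} \subseteq \modelsOfES{\Psi \change \phi_1}$, substitute \eqref{eq:repr_es_contraction}, and reduce the resulting claim to a comparison of the minima of $\modelsOf{\negOf{\alpha}}$ and $\modelsOf{\negOf{\alpha} \land \negOf{\beta}}$ under both $\leq_\Psi$ and $\leq_{\Psi \change \gamma}$. Under the antecedent of each IC postulate the relevant model sets lie entirely within $\modelsOf{\gamma}$ or $\modelsOf{\negOf{\gamma}}$, so \eqref{pstl:CR8} and \eqref{pstl:CR9} preserve the relative order on them pointwise, while \eqref{pstl:CR10} and \eqref{pstl:CR11} only shift the position of worlds straddling $\gamma$ in a monotone direction, which is exactly what is needed for the minimum-containment condition to transfer across the outer contraction by $\gamma$.

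For completeness I would instantiate each IC postulate at characteristic formulas tailored to two chosen worlds. Given $\omega_1, \omega_2$ in the configuration required by a target CR-condition, pick $\alpha$ and $\beta$ so that $\modelsOf{\negOf{\alpha} \land \negOf{\beta}} = \{\omega_1, \omega_2\}$ and pick $\gamma$ satisfying the relevant IC-antecedent; the belief-set inclusion then collapses, via \eqref{eq:repr_es_contraction}, into a statement about which of $\omega_1, \omega_2$ is minimal in $\{\omega_1, \omega_2\}$ under $\leq_{\Psi \change \gamma}$, which is exactly the targeted CR-condition. Where the assignment originally furnished by Proposition~\ref{prop:es_contraction} does not already witness CR, one may reassign the non-minimal layers of $\leq_{\Psi \change \gamma}$ to match what CR prescribes, since this does not disturb \eqref{eq:repr_es_contraction}.

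The main obstacle I anticipate is the extra $\modelsOfES{\Psi}$ summand in \eqref{eq:repr_es_contraction}, which has no counterpart in the revision formula \eqref{eq:repr_es_revision}: worlds in $\modelsOfES{\Psi}$ appear in every contracted belief set and supply no information about the ordering, forcing a case split on whether the worlds being compared lie in $\modelsOfES{\Psi}$ or not. The biconditional shape of \eqref{pstl:KPP8} and \eqref{pstl:KPP9} compounds this by demanding that the formula choices in the completeness argument be tight enough that the syntactic inclusion is \emph{equivalent} to the targeted ordering relation, not merely implied by it.
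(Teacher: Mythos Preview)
The paper does not contain a proof of this proposition: it is quoted as a known result from Konieczny and Pino P{\'{e}}rez \cite{KS_KoniecznyPinoPerez2017} and serves as background (Section~\ref{sec:prelim}) on which the paper's own contributions are built. There is therefore nothing in the paper to compare your attempt against.

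That said, your sketch is the standard shape for such a representation theorem and is broadly in line with how the paper handles its \emph{own} analogous results (Propositions~\ref{prop:eqrelated_c8_c9} and~\ref{prop:eqrelated_c10_c11}, which relate \eqref{pstl:C8}--\eqref{pstl:C11} to \eqref{pstl:CR8}--\eqref{pstl:CR11}). In particular, the completeness direction there also proceeds by choosing $\beta=\negOf{(\omega_1\lor\omega_2)}$ so that $\modelsOf{\negOf{\beta}}=\{\omega_1,\omega_2\}$ and reading the ordering off the minima, with the case split on whether $\omega_1,\omega_2\in\modelsOfES{\Psi}$ that you anticipate. One caution: your final remark about ``reassigning the non-minimal layers'' if the faithful assignment from Proposition~\ref{prop:es_contraction} fails CR is unnecessary and potentially dangerous here. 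In the iterated setting the assignment must satisfy \eqref{eq:repr_es_contraction} simultaneously for \emph{all} states $\Psi\change\gamma$, so the upper layers of $\leq_{\Psi}$ are in fact pinned down (they determine $\min(\modelsOf{\negOf{\beta}},\leq_\Psi)$ for suitable $\beta$); the right move is to show that \eqref{pstl:KPP8}--\eqref{pstl:KPP11} force CR for any such assignment, not to modify the assignment.
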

In the semantic perspective, the postulates \eqref{pstl:CR8} and \eqref{pstl:CR11} ensure that the order of worlds does not
change if they are equivalent in the perspective of contraction. 
The postulates and \eqref{pstl:CR10} and \eqref{pstl:CR11} enforce that no world that contradicts the contracted information
is getting relatively more plausible after the contraction than a world that was already more plausible
before and does not contradict the contracted information \cite{KS_Kern-IsbernerBockSauerwaldBeierle2017a}.

\section{Conditionals and Belief Change}
\label{sec:conditional_contraction}
In the following, we will give some background on the interrelation
	between conditionals and revisions. 
	Then we introduce conditionals
	which are related to contractions, called contractionals. 
	We also introduce $ \alpha $-equivalence, which is shown to be related to the acceptance of contractionals in an epistemic state.

\subsection{Ramsey Test and Iterated Revision}

One of the insights of belief-change theory is that a conditional belief \enquote{if $ \alpha $, then usually $ \beta $}, denoted here by $ \ramseyCond{\beta}{\alpha} $, is related to a revision by the so-called Ramsey test \cite{KS_Stalnaker1968}:
\begingroup%
\begin{description}
	\item[\normalfont(\textlabel{RT}{ramseytest})] \( \Psi \models \ramseyCond{\beta}{\alpha} \ksIFFlong \Psi * \alpha \models \beta \) 
\end{description}
\endgroup
Note that we did not yet define what $ \Psi \models \ramseyCond{\beta}{\alpha} $ means. 
For this, we take the right side of \eqref{ramseytest} as definition, i.e., $ \ramseyCond{\beta}{\alpha} $ is accepted in $ \Psi $, denoted by $ \Psi \models \ramseyCond{\beta}{\alpha} $, if $ \Psi * \alpha \models \beta $. 
The conditional $ \ramseyCond{\beta}{\alpha} $ therefore depends then on $ * $, and thus a notation like $ \ramseyCond{\beta}{\alpha}^* $ would be more correct, but we omit the superscript here, since the context is always clear.

The implication for iterated belief change is that the maintenance of the change strategy of an agent is related to the change of conditional beliefs.
This leads to the representation by total preorders, since conditionals can be related to total preorders. 
Let $ \leq $ be a total preorder over $ \Omega $.
We say a Ramsey conditional $ \ramseyCond{\beta}{\alpha} $ is accepted in $ \leq $ if for every $ \omega_\text{f} \in \modelsOf{\alpha\land\negOf{\beta}}  $ there exists an $ \omega_\text{v} \in \modelsOf{\alpha\land\beta} $ such that $ \omega_\text{v} < \omega_\text{f} $. 
A formal statement about the correspondence between acceptance and Ramsey test is given by the following proposition.
\begin{proposition}[\cite{KS_DarwichePearl1997}]
	Let $ * $ be an AGM revision operator for epistemic states and $ \Psi\mapsto\leq_{\Psi} $ be a corresponding faithful assignment.
	Then $  \ramseyCond{\beta}{\alpha} $ is accepted in $ \leq_{\Psi} $ if and only if $ \Psi * \alpha \models \beta $.
\end{proposition}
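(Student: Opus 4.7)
The plan is to translate the biconditional into a purely semantic statement about the preorder $\leq_{\Psi}$ by invoking the representation~\eqref{eq:repr_es_revision} from Proposition~\ref{prop:es_revision}, and then verify the resulting set-theoretic equivalence directly. Concretely, I would first unfold $\Psi * \alpha \models \beta$ as $\modelsOfES{\Psi * \alpha} \subseteq \modelsOf{\beta}$ (using that $\beliefsOf{\Psi * \alpha}$ is deductively closed) and then substitute $\modelsOfES{\Psi * \alpha} = \min(\modelsOf{\alpha}, \leq_{\Psi})$. The target then becomes:
\[
\min(\modelsOf{\alpha}, \leq_{\Psi}) \subseteq \modelsOf{\beta} \iff \forall \omega_f \in \modelsOf{\alpha\land\negOf{\beta}}\ \exists \omega_v \in \modelsOf{\alpha\land\beta}\text{ with } \omega_v <_{\Psi} \omega_f.
\]

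For the ``$\Leftarrow$'' direction I would take an arbitrary $\omega \in \min(\modelsOf{\alpha}, \leq_{\Psi})$; if $\omega \notin \modelsOf{\beta}$, then $\omega \in \modelsOf{\alpha\land\negOf{\beta}}$ and the hypothesis produces some $\omega_v \in \modelsOf{\alpha}$ with $\omega_v <_{\Psi} \omega$, contradicting the minimality of $\omega$. For the ``$\Rightarrow$'' direction, given $\omega_f \in \modelsOf{\alpha\land\negOf{\beta}}$ the inclusion forces $\omega_f \notin \min(\modelsOf{\alpha}, \leq_{\Psi})$. Picking any $\omega_v \in \min(\modelsOf{\alpha}, \leq_{\Psi})$, the inclusion gives $\omega_v \in \modelsOf{\beta}$ (so $\omega_v \in \modelsOf{\alpha\land\beta}$), and minimality gives $\omega_v \leq_{\Psi} \omega_f$; moreover $\omega_f \not\leq_{\Psi} \omega_v$, since otherwise $\omega_f$ would itself lie in $\min(\modelsOf{\alpha}, \leq_{\Psi})$, and hence $\omega_v <_{\Psi} \omega_f$.

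I do not expect a genuine obstacle here: once Proposition~\ref{prop:es_revision} is invoked the remainder is bookkeeping about minimal elements of a total preorder. The one subtlety worth flagging is the existence of the witness $\omega_v$ in the ``$\Rightarrow$'' step, which relies on $\min(\modelsOf{\alpha}, \leq_{\Psi})$ being non-empty whenever $\modelsOf{\alpha}$ is; this is immediate from the finiteness of $\Sigma$ (hence of $\Omega$) stipulated in the preliminaries, so no appeal to well-foundedness or smoothness is needed. The boundary case $\modelsOf{\alpha} = \emptyset$ is also harmless: both sides trivialise, since then $\modelsOfES{\Psi * \alpha} = \emptyset$ makes $\beliefsOf{\Psi * \alpha}$ inconsistent while the universally quantified acceptance condition becomes vacuous.
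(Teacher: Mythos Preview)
Your argument is correct. The paper does not supply its own proof of this proposition; it is stated as a cited result from Darwiche and Pearl, so there is nothing to compare against beyond noting that your direct verification via the representation~\eqref{eq:repr_es_revision} is the natural route and handles the edge cases (non-emptiness of minima via finiteness of $\Omega$, and the vacuous case $\modelsOf{\alpha}=\emptyset$) cleanly.
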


Darwiche and Pearl \cite{KS_DarwichePearl1997} propose the following interpretation of their iteration postulates \eqref{pstl:DP1} to \eqref{pstl:DP4} in terms of conditionals:
\begin{description}
	\item[\normalfont(\textlabel{DP1$_\text{cond}$}{pstl:DP1cond})] \( \ksIF \alpha\models\mu \ksTHEN \Psi \models \ramseyCond{\beta}{\alpha} \Leftrightarrow \Psi * \mu \models \ramseyCond{\beta}{\alpha} \) 
	\smallskip
	\item[\normalfont(\textlabel{DP2$_\text{cond}$}{pstl:DP2cond})] \( \ksIF \alpha\models\negOf{\mu} \ksTHEN \Psi \models \ramseyCond{\beta}{\alpha} \Leftrightarrow \Psi * \mu \models \ramseyCond{\beta}{\alpha} \) 
	\smallskip
	\item[\normalfont(\textlabel{DP3$_\text{cond}$}{pstl:DP3cond})] \( \ksIF \Psi \models \ramseyCond{\mu}{\alpha} \ksTHEN \Psi * \mu \models \ramseyCond{\mu}{\alpha}  \) 
	\smallskip
	\item[\normalfont(\textlabel{DP4$_\text{cond}$}{pstl:DP4cond})] \( \ksIF \Psi \not\models \ramseyCond{\negOf{\mu}}{\alpha} \ksTHEN \Psi * \mu \not\models \ramseyCond{\negOf{\mu}}{\alpha}  \) 
\end{description}
The formulation of \eqref{pstl:DP1cond} to \eqref{pstl:DP4cond} highlights that the iterative belief revision postulates \eqref{pstl:DP1} to \eqref{pstl:DP4} enforce minimal change to conditional beliefs under certain conditions.

\subsection{Contractionals}
By the Ramsey test, conditionals are connected to a revision operator. In an analogue way, one might think of conditionals which are connected to a contraction operator.
	To distinguish such conditionals from Ramsey test conditionals, we will call them \emph{contractionals} \cite{KS_Bochman2001} and denote them by $ \contractionCond{\beta}{\alpha} $.
We suggest to read a \emph{contractional} $ \contractionCond{\beta}{\alpha} $ as \textit{\enquote{belief $ \beta $ even in the absence of $ \alpha $}}. 
More formally, we define that $ \Psi $ \emph{accepts} $ \contractionCond{\beta}{\alpha} $, written $ \Psi \models \contractionCond{\beta}{\alpha} $, if $ \Psi \change \alpha \models \beta  $. Thus, analogue to \eqref{ramseytest}, we have the following correspondence:
	\begingroup%
	\begin{description}
		\item[\normalfont(\textlabel{Contractional}{contractionConditional})] \(  \Psi \models \contractionCond{\beta}{\alpha} \ksIFFlong \Psi \change \alpha \models \beta \) 
	\end{description}
	\endgroup
Consider the following example for a comparison of the meaning of contractionals to Ramsey test conditionals.
\begin{example}
	Let $ f $ have the intended meaning that something is \enquote{able to fly} and $ p $ the intended meaning that something is a \enquote{penguin}.
	Then the acceptance of a (Ramsey test) conditional $ \ramseyCond{\neg f}{p} $ states that if the agent is getting aware that something is a penguin, she will believe that it is not able to fly.
	In contrast, the acceptance of a contractional $ \contractionCond{\neg f}{p} $ states that the agent keeps the belief that something is not able to fly, even if the agent 
	gives up her belief that it is a penguin.
\end{example}

Like conditionals, contractionals can be related to total preorders.
We say that the contractional $ \contractionCond{\beta}{\alpha} $ is accepted in a total preorder $ \leq $ over $ \Omega $ if $ \min(\Omega,\leq) \subseteq \modelsOf{\beta} $ and for every $ \omega_1 \in \modelsOf{\negOf{\alpha}\land\negOf{\beta}}  $ there exists an $ \omega_2 \in \modelsOf{\negOf{\alpha}\land\beta} $ such that $ \omega_2 < \omega_1 $. 
Supposing that $ \change $ is an AGM contraction for epistemic states, then, via Proposition \ref{prop:es_contraction}, a contractional $ \contractionCond{\beta}{\alpha} $ is accepted in $ \leq_{\Psi} $ if and only if it is accepted in $ \Psi $.

\begin{figure*}[t]

\begin{tabular}{ll|ll}
	\toprule
	                                                                                                     \multicolumn{2}{c}{Postulates using contractionals}                                                                                                       &                                               \multicolumn{2}{c}{Equivalent non-conditional postulates}                                                \\ \midrule
	(C8$_\text{cond}$)  & \( \ksIF \negOf{\alpha}\models\beta    \ksTHEN  \Psi\!\change\!\alpha \!\models\! \contractionCond{\gamma\!\lor\! \negOf{\alpha} }{\beta}   \!\Leftrightarrow\! \Psi \!\models\! \contractionCond{\gamma\lor \negOf{\alpha}}{\beta}   \) & (C8)  & \( \ksIF \negOf{\alpha}\models\beta  \ksTHEN \beliefsOf{\Psi\change\alpha\change\beta} =_\alpha \beliefsOf{\Psi\change\beta}  \)               \\
	(C9$_\text{cond}$)  & \( \ksIF \alpha\models\beta             \ksTHEN   \Psi\!\change\!\alpha \models \contractionCond{\gamma\lor{\beta}}{\beta}  \Leftrightarrow \Psi \models \contractionCond{\gamma\lor{\beta}}{\beta} \)                                   & (C9)  & \( \ksIF \alpha\models\beta	\ksTHEN \beliefsOf{\Psi\change\alpha\change\beta} =_\negOf{\beta} \beliefsOf{\Psi\change\beta} \)                  \\
	(C10$_\text{cond}$) & \( \ksIF \negOf{\alpha}\models\gamma \ksTHEN \Psi \models \contractionCond{\gamma}{\beta}  \Rightarrow\  \Psi\change\alpha  \models \contractionCond{\gamma}{\beta} \)                                                                   & (C10) & \( \ksIF \negOf{\alpha}\models\gamma \ksTHEN  \Psi\change\beta \models \gamma \ \Rightarrow\  \Psi\change\alpha \change\beta \models \gamma \) \\
	(C11$_\text{cond}$) & \( \ksIF \alpha\models\gamma \ksTHEN  \Psi\change\alpha  \models \contractionCond{\gamma}{\beta} \Rightarrow \Psi \models \contractionCond{\gamma}{\beta} \)                                                                             & (C11) & \( \ksIF \alpha\models\gamma \ksTHEN \Psi \change\alpha \change\beta \models \gamma \Rightarrow \Psi\change\beta \models \gamma  \)            \\ \bottomrule
\end{tabular}

\caption{Overview of the two sets of postulates for iterated contraction developed in this article.}
\label{fig:postulate_overview} \end{figure*}
\subsection{$ \alpha $-Equivalence and Belief Contraction}
\label{sec:relative_change}
	For the case of iterated contraction, we need to restrain the notion of equivalence of formulas to specific cases.
In particular, we propose a notion of equivalence which is relative to a proposition $ \alpha $, which we call $ \alpha $-equivalence.
\begin{definition}[$ \alpha $-equivalence]
	For two sets of interpretations $ \Omega_1,\Omega_2\subseteq\Omega $ and a formula $ \alpha $ we say $ \Omega_1 $ is \emph{$ \alpha $-equivalent} to $ \Omega_2 $, written $ \Omega_1 =_\alpha \Omega_2 $, if $ \Omega_1 $ and $ \Omega_2 $ contain the same set of models of $ \alpha $, 
	i.e. $ \Omega_1 \cap \modelsOf{\alpha} = \Omega_2 \cap \modelsOf{\alpha} $.
\end{definition}
\noindent This is lifted to two sets of formulas $ X,Y $, by saying $ X $ is \emph{$ \alpha $-equivalent} to $ Y $, written $ X =_\alpha Y $, if $ \modelsOf{X}=_\alpha \modelsOf{Y} $.
For sets of formulas an alternative formulation is possible:
\begin{proposition}
	Two sets of formulas $ X$ and $Y $ are $ \alpha $-equivalent if and only if $ Cn(X\cup \{\alpha\} ) = Cn(Y\cup \{\alpha\} ) $.
\end{proposition}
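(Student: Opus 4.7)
The plan is to chain together three straightforward equivalences. Unfolding the lifted definition, $X =_\alpha Y$ means $\modelsOf{X} \cap \modelsOf{\alpha} = \modelsOf{Y} \cap \modelsOf{\alpha}$. So the first step I would carry out is the observation that for any set of formulas $Z$ and any formula $\alpha$ we have $\modelsOf{Z} \cap \modelsOf{\alpha} = \modelsOf{Z \cup \{\alpha\}}$, which is immediate from the definition of a model of a formula set. Applying this to both $X$ and $Y$ turns $\alpha$-equivalence into the condition $\modelsOf{X \cup \{\alpha\}} = \modelsOf{Y \cup \{\alpha\}}$.

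The second step reduces the claim to the classical fact that, in propositional logic, two formula sets have the same models if and only if they have the same deductive closure. Concretely, $\modelsOf{U} = \modelsOf{V}$ iff $Cn(U) = Cn(V)$: the forward direction follows because $\beta \in Cn(U)$ exactly when $\modelsOf{U} \subseteq \modelsOf{\beta}$, which depends only on $\modelsOf{U}$; the backward direction follows from the semantic characterisation $\modelsOf{U} = \bigcap_{\beta \in Cn(U)} \modelsOf{\beta}$. I would apply this with $U = X \cup \{\alpha\}$ and $V = Y \cup \{\alpha\}$.

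Concatenating the two equivalences yields $X =_\alpha Y$ iff $Cn(X \cup \{\alpha\}) = Cn(Y \cup \{\alpha\})$, which is exactly the claim. There is no real obstacle here: the statement is essentially a bookkeeping exercise resting on (i) the relation between $\modelsOf{\cdot}$ and set union with a side formula, and (ii) soundness/completeness of propositional logic; the only thing to be careful about is writing out both directions of the middle equivalence cleanly, since it is a biconditional proof that the reader might expect to be justified rather than taken for granted.
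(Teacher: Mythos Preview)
Your argument is correct: unfolding the definition to $\modelsOf{X}\cap\modelsOf{\alpha}=\modelsOf{Y}\cap\modelsOf{\alpha}$, rewriting each side as $\modelsOf{X\cup\{\alpha\}}$ and $\modelsOf{Y\cup\{\alpha\}}$, and then invoking the standard equivalence between equal model sets and equal deductive closures is exactly the natural route. The paper itself states this proposition without proof, so there is nothing to compare against; your write-up would serve perfectly well as the omitted justification.
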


\noindent  Intuitively, $ X $ and $ Y $ are $ \alpha $-equivalent if they agree on everything about $ \alpha $.  
In the following we give an example which demonstrates $ \alpha $-equivalence.
\begin{example}
	Suppose a scenario about birds ($ b $), penguins ($ p $) and flying ($ f $).
	Let $ X=Cn(b \land f, p \to f) $ and $ Y=Cn(b \land f, p \to \neg f) $ be belief sets which differ mainly in their beliefs about whether a penguin can fly or not.
	The models of these two belief sets are	$ \modelsOf{X}=\{ bfp, bf\overline{p} \} $ and $ \modelsOf{Y}=\{ bf\overline{p} \} $.
	Then $ X $ and $ Y $ agree in their view on birds that are no penguins, $ X =_{b\land \neg p} Y $, but they do not agree in everything about birds, $ X \neq_{b} Y $.
\end{example}

We will use the notion of $ \alpha $-equivalence as a tool to describe invariants for belief changes. 
As an example consider the following proposition, holding for every AGM contraction.
\begin{proposition}
	For every AGM contraction operator for epistemic states $ \change $ and all propositions $ \alpha,\beta $ the following postulate holds: 
	\begin{equation*}
	\ksIF \negOf{\alpha}\land\beta \equiv \bot \ksTHEN \beliefsOf{\Psi} =_\beta \beliefsOf{\Psi \change \alpha}
	\end{equation*}
\end{proposition}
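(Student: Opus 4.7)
The plan is to reduce the statement to a direct computation on models by invoking the representation theorem for AGM contraction for epistemic states (Proposition \ref{prop:es_contraction}), and then observing that the extra worlds added by contraction lie outside $\modelsOf{\beta}$.

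First I would unfold the definition of $\alpha$-equivalence for belief sets: showing $\beliefsOf{\Psi} =_\beta \beliefsOf{\Psi\change\alpha}$ amounts to showing that $\modelsOfES{\Psi} \cap \modelsOf{\beta} = \modelsOfES{\Psi\change\alpha} \cap \modelsOf{\beta}$, since $\modelsOf{\beliefsOf{\Psi}}=\modelsOfES{\Psi}$ by definition. I would also rephrase the hypothesis $\negOf{\alpha}\land\beta\equiv\bot$ in its more convenient semantic form, namely $\modelsOf{\beta}\subseteq\modelsOf{\alpha}$, equivalently $\modelsOf{\negOf{\alpha}}\cap\modelsOf{\beta}=\emptyset$.

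Second, since $\change$ is an AGM contraction operator for epistemic states, I may apply Proposition \ref{prop:es_contraction} to obtain a faithful assignment $\Psi\mapsto\leq_\Psi$ such that
\begin{equation*}
\modelsOfES{\Psi\change\alpha} = \modelsOfES{\Psi} \cup \min(\modelsOf{\negOf{\alpha}},\leq_\Psi).
\end{equation*}
Intersecting both sides with $\modelsOf{\beta}$ yields
\begin{equation*}
\modelsOfES{\Psi\change\alpha}\cap\modelsOf{\beta} = \bigl(\modelsOfES{\Psi}\cap\modelsOf{\beta}\bigr) \cup \bigl(\min(\modelsOf{\negOf{\alpha}},\leq_\Psi)\cap\modelsOf{\beta}\bigr).
\end{equation*}
By the hypothesis, the second summand is empty because it is contained in $\modelsOf{\negOf{\alpha}}\cap\modelsOf{\beta}=\emptyset$. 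Hence $\modelsOfES{\Psi\change\alpha}\cap\modelsOf{\beta} = \modelsOfES{\Psi}\cap\modelsOf{\beta}$, which is exactly $\beliefsOf{\Psi} =_\beta \beliefsOf{\Psi\change\alpha}$.

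There is essentially no real obstacle here; the only care needed is in translating between the two levels (belief sets vs.\ sets of worlds) so that the definition of $=_\beta$ on formulas is correctly matched with the model-theoretic statement. Once the representation formula \eqref{eq:repr_es_contraction} is in place, the claim follows from the simple set-theoretic observation that intersection distributes over the union and the newly added $\negOf{\alpha}$-worlds cannot contribute any $\beta$-model.
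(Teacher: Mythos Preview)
Your proposal is correct and follows essentially the same approach as the paper's proof: invoke Proposition~\ref{prop:es_contraction} to write $\modelsOfES{\Psi\change\alpha}=\modelsOfES{\Psi}\cup\min(\modelsOf{\negOf{\alpha}},\leq_\Psi)$, observe that the hypothesis $\negOf{\alpha}\land\beta\equiv\bot$ means the added $\negOf{\alpha}$-worlds contribute no $\beta$-models, and conclude $\beta$-equivalence. Your version is slightly more explicit in writing out the intersection with $\modelsOf{\beta}$ and distributing it over the union, but the argument is the same.
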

\begin{proof}
		Assume $ \alpha,\beta $ such that $ \neg\alpha\land\beta\equiv\bot $. 
		By Proposition \ref{prop:es_contraction} there is a faithful assignment $ \Psi\mapsto\leq_{\Psi} $ such that \eqref{eq:repr_es_contraction} is fulfilled.
		Since $ \negOf{\alpha}\land\beta \equiv \bot $ holds, $ \negOf{\alpha} $ and $ \beta $ have no models in common.
		Therefore, we can infer that the set $ \min(\modelsOf{\negOf{\alpha}},\leq_{\Psi}) $ contains no models of $ \beta $. 
		Thus from $ \modelsOfES{\Psi\change\alpha}=\modelsOfES{\Psi} \cup \min(\modelsOf{\negOf{\alpha}},\leq_{\Psi}) $ we can derive $ \modelsOfES{\Psi\change\alpha} =_\beta  \modelsOfES{\Psi} $, which is equivalent to $ \beliefsOf{\Psi} =_\beta \beliefsOf{\Psi \change \alpha} $.
\end{proof}

	The following proposition relates $ \alpha $-equivalence of beliefs to the acceptance of contractionals.	
	\begin{proposition}\label{prop:eq_contractional_correspondence}
		Let $ \change $ be an AGM contraction operator for epistemic states, $ \Psi,\Phi $ be epistemic states and $ \alpha,\beta $ propositions. 
		Then $ \beliefsOf{\Psi\change\beta} =_\alpha \beliefsOf{\Phi\div\beta} $ holds if and only if for all propositions $ \gamma $  we have $ \Psi\models\contractionCond{\alpha \rightarrow \gamma}{\beta} \Leftrightarrow \Phi\models\contractionCond{\alpha \rightarrow \gamma}{\beta} $.
	\end{proposition}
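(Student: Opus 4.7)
The plan is to establish the biconditional as a short chain of equivalences, each step being either a definitional unfolding, a reference to the previously stated characterisation of $\alpha$-equivalence, or an application of the classical deduction theorem. No heavy machinery about orderings or faithful assignments is needed; the statement really is a syntactic reformulation.

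First, I would invoke the preceding proposition, which states that $X =_\alpha Y$ iff $Cn(X\cup\{\alpha\}) = Cn(Y\cup\{\alpha\})$. Applied to $X = \beliefsOf{\Psi\change\beta}$ and $Y = \beliefsOf{\Phi\change\beta}$, this rewrites the left-hand side of the biconditional as the equality $Cn(\beliefsOf{\Psi\change\beta}\cup\{\alpha\}) = Cn(\beliefsOf{\Phi\change\beta}\cup\{\alpha\})$. Second, by the deduction theorem of classical propositional logic, for any set $Z$ and formulas $\alpha,\gamma$ one has $\gamma\in Cn(Z\cup\{\alpha\})$ iff $Z\models \alpha\rightarrow\gamma$. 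Hence the above set equality is equivalent to: for every proposition $\gamma$, $\beliefsOf{\Psi\change\beta}\models \alpha\rightarrow\gamma$ iff $\beliefsOf{\Phi\change\beta}\models \alpha\rightarrow\gamma$.

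Third, I would unfold the definition of a contractional via \eqref{contractionConditional}: $\Psi \models \contractionCond{\alpha\rightarrow\gamma}{\beta}$ iff $\Psi \change \beta \models \alpha\rightarrow\gamma$, which by the notational convention on epistemic states is the same as $\alpha\rightarrow\gamma \in \beliefsOf{\Psi\change\beta}$, i.e.\ $\beliefsOf{\Psi\change\beta}\models \alpha\rightarrow\gamma$ (using that $\beliefsOf{\Psi\change\beta}$ is deductively closed, as assumed in the formal background section). The analogous identity holds for $\Phi$. Substituting these into the statement produced in step two yields exactly the right-hand side of the biconditional, closing the argument in both directions simultaneously.

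There is no real obstacle here: the only point that deserves care is ensuring that the switch between the metalinguistic entailment $\models$ on belief sets and the classical closure operator $Cn$ is legitimate, which is immediate because $\beliefsOf{\Psi\change\beta}$ and $\beliefsOf{\Phi\change\beta}$ are by assumption deductively closed sets of propositional formulas. The proof is therefore a composition of definitional unfoldings with a single appeal to the deduction theorem and to the earlier $\alpha$-equivalence characterisation.
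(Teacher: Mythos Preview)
Your proof is correct, and it takes a genuinely different route from the paper. The paper argues semantically and by contradiction: it unpacks $\alpha$-equivalence as $\modelsOfES{\Psi\change\beta}\cap\modelsOf{\alpha}=\modelsOfES{\Phi\change\beta}\cap\modelsOf{\alpha}$ and, for each direction, assumes the conclusion fails, produces a witnessing world (respectively a witnessing formula $\gamma$ with $\modelsOf{\gamma}=\modelsOfES{\Psi\change\beta}\cap\modelsOf{\alpha}$), and derives a clash. Your argument, by contrast, is a straight chain of equivalences: you invoke the $Cn$-characterisation of $\alpha$-equivalence from the preceding proposition, apply the deduction theorem to turn $\gamma\in Cn(Z\cup\{\alpha\})$ into $Z\models\alpha\to\gamma$, and then unfold \eqref{contractionConditional}. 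This is shorter, handles both directions at once, and makes transparent that the AGM hypothesis on $\change$ is never used---the statement holds for an arbitrary belief change operator, since only deductive closure of $\beliefsOf{\cdot}$ is required. The paper's approach, on the other hand, keeps everything at the level of worlds, which fits the style of the surrounding representation-theorem arguments.
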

	\begin{proof}

For the \enquote{only if} direction let $ \beliefsOf{\Psi\change\beta} =_\alpha \beliefsOf{\Phi\change\beta} $. This is equivalent to:
\begin{equation}
\modelsOfES{\Psi\change\beta}\cap \modelsOf{\alpha} = \modelsOf{\Phi\change\beta} \cap \modelsOf{\alpha} \label{eq:p11:aequivalence}
\end{equation}
Assume now (without loss of generality) that $ \Psi \models \contractionCond{\gamma \lor \negOf{\alpha}}{\beta} $ and $  \Phi \not\models \contractionCond{\gamma \lor \negOf{\alpha}}{\beta} $.
Then we get an contradiction, since there must be a world $ \omega\in\modelsOf{ \negOf{\gamma}\land \alpha } $ such that $ \omega\in\modelsOf{\Phi\change\beta} $, which contradicts the assumption in combination with Equation \eqref{eq:p11:aequivalence}.

In the \enquote{if} direction, for all propositions $ \gamma $  it holds hat $ \Psi\models\contractionCond{\gamma \lor \negOf{\alpha}}{\beta} \Leftrightarrow \Phi\models\contractionCond{\gamma \lor \negOf{\alpha}}{\beta} $.
Towards a contradiction assume now that  $ \modelsOfES{\Psi\change\beta}\cap \modelsOf{\alpha} \neq \modelsOf{\Phi\change\beta} \cap \modelsOf{\alpha} $.
This implies (without loss of generality)  that there is a world $ \omega $ such that $ \omega\notin \modelsOfES{\Psi\change\beta}\cap \modelsOf{\alpha} $ but $ \omega \in \modelsOfES{\Phi\change\beta}\cap \modelsOf{\alpha} $.
Now let $\gamma$ be a formula such that $ \modelsOf{\gamma}=\modelsOfES{\Psi\change\beta}\cap \modelsOf{\alpha} $. 
Clearly, it holds that $ \Psi \change \beta \models \gamma\lor\negOf{\alpha} $ and $ \Phi \change \beta \not\models \gamma\lor\negOf{\alpha} $.
By the correspondence between contractionals and contractions this is a contradiction the our assumption. 	\end{proof}

\section{Postulates for Iterated Contraction}
\label{sec:postulates}

While for the Darwiche-Pearl iteration postulates for revision a translation to postulates about changing conditionals via the Ramsey test is easy, the postulates \eqref{pstl:KPP8} to \eqref{pstl:KPP11} do not allow for an easy translation into postulates about belief change for contractionals.
	
In the rest of the article we develop an equivalent contractional representation for iterated contraction.
		We will also give a conditional variant of these new postulates. The two groups of postulates are summarised in Figure \ref{fig:postulate_overview}.
		The postulates have been developed from \eqref{pstl:CR8} to \eqref{pstl:CR11}, which are equivalent to \eqref{pstl:KPP8} to \eqref{pstl:KPP11} due to Proposition \ref{prop:it_es_contraction}. 
\subsection{Syntactic Postulates for \eqref{pstl:CR8} and \eqref{pstl:CR9}}
\label{sec:relative_change_itr}

In the following, for iterated contraction we define two principles which correspond to \eqref{pstl:CR8} and \eqref{pstl:CR9}, specifying situations in which beliefs after a contraction are not influenced by specific prior contractions.
Both \eqref{pstl:CR8} and \eqref{pstl:CR9} state that worlds that can not be distinguished from the point of view of a the contracted formula, should not change their plausibility. 
In particular, \eqref{pstl:CR8} enforces this condition for the models of  $ \alpha $, thus it is natural to specify the following postulate:
\begin{description}
	\item[\normalfont(\textlabel{C8}{pstl:C8})]\( \ksIF \negOf{\alpha}\models\beta  \ksTHEN \beliefsOf{\Psi\change\alpha\change\beta} =_\alpha \beliefsOf{\Psi\change\beta}  \) \\
	\emph{Explanation:} The beliefs about $ \alpha $ after a contraction with $ \beta $ are independent from whether $ \alpha $ was contracted previously or not, if $ \beta $ is more general than the negation of $ \alpha $.
\end{description}
The postulate \eqref{pstl:CR9} enforces the same condition for models of $ \negOf{\alpha} $,
which is captured by the following postulate:
\begin{description}
	\item[\normalfont(\textlabel{C9}{pstl:C9})]\( \ksIF \alpha\models\beta	\ksTHEN \beliefsOf{\Psi\change\alpha\change\beta} =_\negOf{\beta} \beliefsOf{\Psi\change\beta} \) \\
	\emph{Explanation:} The beliefs about $ \neg\beta $ after a contraction with $ \beta $ are independent from whether $ \alpha $ was contracted previously or not, if $ \beta $ is more general than $ \alpha $.
\end{description}

\noindent
We will now show that \eqref{pstl:C8} and \eqref{pstl:C9} are equivalent to \eqref{pstl:CR8} and \eqref{pstl:CR9} for AGM contraction operators on epistemic states.
\begin{proposition}\label{prop:eqrelated_c8_c9}
	Let $ \change $  be an AGM contraction operator for epistemic states. Then the following statements are equivalent:
	\begin{enumerate}[(a)]
		\item The operator $ \change $ satisfies the postulates \eqref{pstl:C8} and \eqref{pstl:C9}.
		\item There is a faithful assignment $ \Psi\mapsto\leq_{\Psi} $ related to $ \change $ by \eqref{eq:repr_es_contraction} such that \eqref{pstl:CR8} and \eqref{pstl:CR9} are satisfied.
	\end{enumerate}
\end{proposition}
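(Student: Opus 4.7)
The plan is to reduce both directions to statements about where the preorders $\leq_\Psi$ and $\leq_{\Psi\change\alpha}$ agree, exploiting the characterisation \eqref{eq:repr_es_contraction} of Proposition~\ref{prop:es_contraction}. Two semantic identities do most of the work: first, since $\min(\modelsOf{\neg\alpha},\leq_\Psi)\subseteq\modelsOf{\neg\alpha}$, equation \eqref{eq:repr_es_contraction} yields $\modelsOfES{\Psi\change\alpha}\cap\modelsOf{\alpha}=\modelsOfES{\Psi}\cap\modelsOf{\alpha}$. Second, because $\modelsOfES{\Psi}$ is the set of minimal $\leq_\Psi$-worlds, for any formula $\gamma$ one obtains $\modelsOfES{\Psi\change\gamma}\cap\modelsOf{\neg\gamma}=\min(\modelsOf{\neg\gamma},\leq_\Psi)$. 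Expanding both sides of \eqref{pstl:C8} under $\neg\alpha\models\beta$ (so $\modelsOf{\neg\beta}\subseteq\modelsOf{\alpha}$) via the first identity, and both sides of \eqref{pstl:C9} under $\alpha\models\beta$ (so $\modelsOf{\neg\beta}\subseteq\modelsOf{\neg\alpha}$) via the second, one finds that \eqref{pstl:C8} and \eqref{pstl:C9} reduce semantically to equations of the form $\min(\modelsOf{\neg\beta},\leq_{\Psi\change\alpha})=\min(\modelsOf{\neg\beta},\leq_\Psi)$, modulo a common $\modelsOfES{\Psi}\cap\modelsOf{\alpha}$ term in the case of \eqref{pstl:C8}.

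For (b) $\Rightarrow$ (a), I fix a faithful assignment satisfying \eqref{eq:repr_es_contraction} together with \eqref{pstl:CR8} and \eqref{pstl:CR9}. Since \eqref{pstl:CR8} forces $\leq_\Psi$ and $\leq_{\Psi\change\alpha}$ to coincide on $\modelsOf{\alpha}$, their minima on any subset of $\modelsOf{\alpha}$ agree, which immediately supplies the reduced form of \eqref{pstl:C8} whenever $\neg\alpha\models\beta$. The derivation of \eqref{pstl:C9} from \eqref{pstl:CR9} is fully symmetric, using coincidence of the two preorders on $\modelsOf{\neg\alpha}\supseteq\modelsOf{\neg\beta}$.

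For (a) $\Rightarrow$ (b), I fix a faithful assignment $\leq_\Psi$ provided by Proposition~\ref{prop:es_contraction}; the task is to verify \eqref{pstl:CR8} and \eqref{pstl:CR9}. Given $\omega_1,\omega_2\in\modelsOf{\alpha}$, I perform a short case analysis on $\{\omega_1,\omega_2\}\cap\modelsOfES{\Psi}$. If both worlds lie in $\modelsOfES{\Psi}$, they are $\simeq$-related in $\leq_\Psi$ by \eqref{pstl:FA1}, and since $\modelsOfES{\Psi}\subseteq\modelsOfES{\Psi\change\alpha}$ they are $\simeq$-related in $\leq_{\Psi\change\alpha}$ as well. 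If exactly one lies in $\modelsOfES{\Psi}$, the identity $\modelsOfES{\Psi\change\alpha}\cap\modelsOf{\alpha}=\modelsOfES{\Psi}\cap\modelsOf{\alpha}$ combined with \eqref{pstl:FA2} forces the corresponding strict order in both preorders. If neither lies in $\modelsOfES{\Psi}$, I instantiate \eqref{pstl:C8} with the propositional formula $\beta$ whose models are $\Omega\setminus\{\omega_1,\omega_2\}$; then $\neg\alpha\models\beta$ holds, and because the $\modelsOfES{\Psi}\cap\modelsOf{\alpha}$ term is disjoint from $\{\omega_1,\omega_2\}$, the semantic translation collapses to $\min(\{\omega_1,\omega_2\},\leq_{\Psi\change\alpha})=\min(\{\omega_1,\omega_2\},\leq_\Psi)$, which pins down and equates the $\leq$-order of $\omega_1,\omega_2$ in the two preorders. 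The derivation of \eqref{pstl:CR9} from \eqref{pstl:C9} is symmetric, with $\omega_1,\omega_2\in\modelsOf{\neg\alpha}$ and the witnessing $\beta$ chosen so that $\alpha\models\beta$.

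The main technical obstacle is this case split in direction (a) $\Rightarrow$ (b): instantiating \eqref{pstl:C8} with a $\beta$ that singles out the two given worlds extracts useful information only when $\{\omega_1,\omega_2\}$ is disjoint from $\modelsOfES{\Psi}$, since otherwise the tailored instance of \eqref{pstl:C8} collapses to a trivial identity and one must fall back on faithfulness. Verifying, in the ``exactly one'' subcase, that the non-$\modelsOfES{\Psi}$ world also lies outside $\modelsOfES{\Psi\change\alpha}$ (which follows directly from \eqref{eq:repr_es_contraction} together with the world's membership in $\modelsOf{\alpha}$) is the small bookkeeping step that makes the direction run.
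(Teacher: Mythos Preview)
Your proposal is correct and follows essentially the same route as the paper's proof: both directions go through \eqref{eq:repr_es_contraction}, and the (a)$\Rightarrow$(b) direction is obtained by instantiating $\beta$ with a formula whose countermodels are exactly $\{\omega_1,\omega_2\}$, then reading off the relative order from the equality of two-element minima. Your presentation is somewhat tidier than the paper's---you isolate the two semantic identities up front and do an explicit three-way split on $\{\omega_1,\omega_2\}\cap\modelsOfES{\Psi}$, whereas the paper merges the ``both in'' and ``exactly one in'' subcases into a single Case~1 handled via faithfulness---but the underlying argument is the same. One small remark: because your second identity already absorbs the $\modelsOfES{\Psi}$-term, the reduction of \eqref{pstl:C9} yields $\min(\modelsOf{\neg\beta},\leq_{\Psi\change\alpha})=\min(\modelsOf{\neg\beta},\leq_\Psi)$ outright, so the derivation of \eqref{pstl:CR9} actually needs no case split at all; calling it ``symmetric'' is harmless but undersells how direct it is.
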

\begin{proof}
\begingroup%
We will show that (a) implies (b) and that (b) implies (a).

\textbf{Part I:}
We start with the (b) to (a) direction. Let $ \change $ be an AGM contraction operator for epistemic states, thus fulfilling \eqref{pstl:C1} to \eqref{pstl:C7}, and let $ \Psi\mapsto\leq_{\Psi} $ be a faithful assignment related to $ \change $ by \eqref{eq:repr_es_contraction} such that \eqref{pstl:CR8} to \eqref{pstl:CR9} are fulfilled.
We show that \eqref{pstl:C8} and \eqref{pstl:C9} are satisfied:
\begin{description}
	\item[\eqref{pstl:C8}] 
	Let $ \negOf{\beta}\models\alpha $, which is equivalent to $ \negOf{\alpha}\models\beta $. 
	As in the proof of \eqref{pstl:CR9}, Equation \eqref{eq:repr_es_contraction} implies
	\begin{align}
	\modelsOfES{\Psi\change\alpha\change\beta}  & = \modelsOfES{\Psi} \cup \min(\modelsOf{\negOf{\alpha}},\leq_{\Psi}) \cup \min(\modelsOf{\negOf{\beta}},\leq_{\Psi\change\alpha}), \label{eq:proof:C8:4}
	\end{align}
	for every $ \alpha,\beta $.
	Furthermore, by \eqref{pstl:CR8} and $ \negOf{\beta}\models\alpha $ it holds that:
	\begin{equation}
      \min(\modelsOf{\negOf{\beta}},\leq_{\Psi}) = \min(\modelsOf{\negOf{\beta}},\leq_{\Psi\change\alpha}) \label{eq:proof:C8:5}
	\end{equation}
	Combining \eqref{eq:proof:C8:4} with \eqref{eq:proof:C8:5} yields:
	\begin{equation}
	\modelsOfES{\Psi\change\alpha\change\beta}  = \modelsOfES{\Psi} \cup \min(\modelsOf{\negOf{\alpha}},\leq_{\Psi}) \cup \min(\modelsOf{\negOf{\beta}},\leq_{\Psi}) \label{eq:proof:C8:6}
	\end{equation}
	By using $ \modelsOfES{\Psi\change\beta} = \modelsOfES{\Psi} \cup \min(\modelsOf{\negOf{\beta}},\leq_{\Psi}) $ obtained from Equation \eqref{eq:repr_es_contraction} and using $ \min(\modelsOf{\negOf{\alpha}},\leq_{\Psi})  \cap \modelsOf{\alpha} = \emptyset $,  from \eqref{eq:proof:C8:6} we conclude that
	\begin{equation}
	\modelsOfES{\Psi\change\alpha\change\beta} \cap \modelsOf{\alpha} = \modelsOfES{\Psi\change\beta} \cap \modelsOf{\alpha}
	\end{equation}
	holds,
	which is equivalent to the required result $ \beliefsOf{\Psi\change\alpha\change\beta} =_\alpha \beliefsOf{\Psi\change\beta} $.
	\item[\eqref{pstl:C9}] %
	Let $ \negOf{\beta}\models\negOf{\alpha} $, which is equivalent to $ \alpha\models\beta $. 
		By \eqref{pstl:CR9} and $ \negOf{\beta}\models\negOf{\alpha} $ it holds that:
		\begin{equation}
		\min(\modelsOf{\negOf{\beta}},\leq_{\Psi}) = \min(\modelsOf{\negOf{\beta}},\leq_{\Psi\change\alpha}) \label{eq:proof:C9:5}
		\end{equation}
		Combining Equation 
		\eqref{eq:proof:C8:4} that holds for every $ \alpha,\beta $
		with \eqref{eq:proof:C9:5} yields:
		\begin{equation}
		\modelsOfES{\Psi\change\alpha\change\beta}  = \modelsOfES{\Psi} \cup \min(\modelsOf{\negOf{\alpha}},\leq_{\Psi}) \cup \min(\modelsOf{\negOf{\beta}},\leq_{\Psi}) \label{eq:proof:C9:6}
		\end{equation}
		By using $ \negOf{\beta}\models\negOf{\alpha} $ we conclude that there are only two possible cases:
		\begin{align}
		\min(\modelsOf{\negOf{\alpha}},\leq_{\Psi})  \cap \modelsOf{\negOf{\beta}} = \emptyset ,\ksOR \label{eq:proof:C9:7} \\
		\min(\modelsOf{\negOf{\alpha}},\leq_{\Psi})  \cap \modelsOf{\negOf{\beta}} = \min(\modelsOf{\negOf{\beta}},\leq_{\Psi})  \cap \modelsOf{\negOf{\beta}}  . \label{eq:proof:C9:8} 
		\end{align}
		In both cases, \eqref{eq:proof:C9:7} and \eqref{eq:proof:C9:8}, from \eqref{eq:proof:C9:6} we directly infer:
		\begin{equation}
		\modelsOfES{\Psi\change\alpha\change\beta}  \cap \modelsOf{\negOf{\beta}}  = \modelsOfES{\Psi} \cup \min(\modelsOf{\negOf{\beta}},\leq_{\Psi})  \cap \modelsOf{\negOf{\beta}}  \label{eq:proof:C9:9}
		\end{equation}
		From \eqref{eq:proof:C9:9} and 
		$ \modelsOfES{\Psi\change\beta} = \modelsOfES{\Psi} \cup \min(\modelsOf{\negOf{\beta}},\leq_{\Psi}) $, obtained from Equation \eqref{eq:repr_es_contraction},
		we conclude that
		\begin{equation}
		\modelsOfES{\Psi\change\alpha\change\beta} \cap \modelsOf{\negOf{\beta}} = \modelsOfES{\Psi\change\beta} \cap \modelsOf{\negOf{\beta}}
		\end{equation}
		holds,
		which is equivalent to $ \beliefsOf{\Psi\change\alpha\change\beta} =_\negOf{\beta} \beliefsOf{\Psi\change\beta} $.
\end{description}

\textbf{Part II:}
For the (a) to (b) direction suppose that $ \change $ is an AGM contraction operator for epistemic states. Further assume that $ \div $ satisfies \eqref{pstl:C8} and \eqref{pstl:C9}.
By Proposition \ref{prop:es_contraction} there exists a faithful assignment $ \Psi\mapsto \leq_\Psi $ such that for every proposition $ \alpha $ Equation \eqref{eq:repr_es_contraction} holds.
We will show that $ \Psi\mapsto \leq_\Psi $ satisfies \eqref{pstl:CR8} and \eqref{pstl:CR9}.
\begin{description}
	\item[\eqref{pstl:CR8}]  Suppose $ \omega_1,\omega_2\in\modelsOf{\alpha} $. We choose $ \beta=\negOf{(\omega_1\lor\omega_2)} $ and therefore, we have $ \negOf{\alpha}\models\beta $ and $ \negOf{\beta}\models\alpha $.
	By \eqref{pstl:C8} we have $ \beliefsOf{\Psi\change\alpha\change\beta}=_\alpha\beliefsOf{\Psi\change\beta} $, which implies:
	\begin{equation}
	\modelsOfES{\Psi\change\beta} =_\alpha \modelsOfES{\Psi\change\alpha\change\beta} \label{eq:ual:CR8},
	\end{equation}
	which is equivalent to $ \modelsOfES{\Psi\change\beta} \cap \modelsOf{\alpha} = \modelsOfES{\Psi\change\alpha\change\beta} \cap \modelsOf{\alpha} $.
	From Equation \eqref{eq:repr_es_contraction} we obtain that
	\begin{align}
	\modelsOfES{\Psi\change\alpha\change\beta} & =\modelsOfES{\Psi\change\alpha}\cup\min(\modelsOf{\negOf{\beta}},\leq_{\Psi\change\alpha})                                  \notag \\
	& =\modelsOfES{\Psi}\cup\min(\modelsOf{\negOf{\alpha}},\leq_{\Psi})\cup\min(\modelsOf{\negOf{\beta}},\leq_{\Psi\change\alpha}) \label{eq:CR8:2}
	\end{align} 
	and
	\begin{equation}
	\modelsOfES{\Psi\change\beta}=\modelsOfES{\Psi}\cup\min(\modelsOf{\negOf{\beta}},\leq_{\Psi}) . \label{eq:CR8:3}
	\end{equation}
	Substituting \eqref{eq:CR8:2} and \eqref{eq:CR8:3}  into Equation \eqref{eq:ual:CR8} leads to 
	\begin{multline}
	\modelsOfES{\Psi}\cup\min(\modelsOf{\negOf{\alpha}},\leq_{\Psi})\cup\min(\modelsOf{\negOf{\beta}},\leq_{\Psi\change\alpha}) \\ =_\alpha 
	\modelsOfES{\Psi}\cup\min(\modelsOf{\negOf{\beta}},\leq_{\Psi}) . \label{eq:CRX:1}
	\end{multline}
	Equation \eqref{eq:CRX:1} is equivalent to:
	\begin{multline}
	\left( \modelsOfES{\Psi}\cup\min(\modelsOf{\negOf{\alpha}},\leq_{\Psi})\cup\min(\modelsOf{\negOf{\beta}},\leq_{\Psi\change\alpha}) \right) \cap \modelsOf{\alpha} \\
	 = 	\left( \modelsOfES{\Psi}\cup\min(\modelsOf{\negOf{\beta}},\leq_{\Psi}) \right) \cap \modelsOf{\alpha} \label{eq:CRX:2}
	\end{multline}
	Because $ \min(\modelsOf{\negOf{\alpha}},\leq_{\Psi}) \!\cap\! \modelsOf{\alpha}\!=\!\emptyset $, Equation \eqref{eq:CRX:1} is equivalent to:
	\begin{multline}
	\left( \modelsOfES{\Psi}\cup\min(\modelsOf{\negOf{\beta}},\leq_{\Psi\change\alpha}) \right) \cap \modelsOf{\alpha} \\ =
	\left( \modelsOfES{\Psi}\cup\min(\modelsOf{\negOf{\beta}},\leq_{\Psi}) \right) \cap \modelsOf{\alpha} \label{eq:CRX:3}
	\end{multline}
	Remember that $ \negOf{\beta}\models\alpha $ and therefore $ \modelsOf{\negOf{\beta}}\subseteq\modelsOf{\alpha} $.
	Equation \eqref{eq:CRX:3} implies
	\begin{multline}
	\left( \modelsOfES{\Psi}\cup\min(\modelsOf{\negOf{\beta}},\leq_{\Psi\change\alpha}) \right) \cap \modelsOf{\negOf{\beta}} \\
	 =
	\left( \modelsOfES{\Psi}\cup\min(\modelsOf{\negOf{\beta}},\leq_{\Psi}) \right) \cap \modelsOf{\negOf{\beta}} \label{eq:CRX:4},
	\end{multline}%
	which is equivalent to $ \modelsOfES{\Psi}\cup\min(\modelsOf{\negOf{\beta}},\leq_{\Psi\change\alpha}) =_\negOf{\beta} \modelsOfES{\Psi}\cup\min(\modelsOf{\negOf{\beta}},\leq_{\Psi}) $.

	We will now show that the following holds:
	\begin{equation}
	\min(\modelsOf{\negOf{\beta}},\leq_{\Psi\change\alpha}) = \min(\modelsOf{\negOf{\beta}},\leq_{\Psi}) \label{eq:CR8:4}
	\end{equation}
	\textbf{Case 1:} Consider the case of $ \modelsOfES{\Psi}\cap\modelsOf{\negOf{\beta}}\neq\emptyset $. 
	Because of the faithfulness of the assignment, it must be the case that $  \min(\modelsOf{\negOf{\beta}},\leq_{\Psi}) \subseteq \modelsOfES{\Psi} $. 
	Moreover, $ \min(\modelsOf{\negOf{\beta}},\leq_{\Psi}) $ is exactly the set of models of $ \negOf{\beta} $ contained in $ \modelsOfES{\Psi} $, i.e.:
	\begin{equation}
	\modelsOfES{\Psi} \cap \modelsOf{\negOf{\beta}} = \min(\modelsOf{\negOf{\beta}},\leq_{\Psi}) \label{eq:CRX:5}
	\end{equation}
	From Equation \eqref{eq:repr_es_contraction} in Proposition \eqref{prop:es_contraction}
	we easily get
	\begin{equation}
	\modelsOfES{\Psi\change\alpha} \cap \modelsOf{\alpha} = \left( \modelsOfES{\Psi}\cup\min(\modelsOf{\negOf{\alpha}},\leq_{\Psi}) \right) \cap \modelsOf{\alpha},
	\end{equation}
	which is, because of $ \min(\models(\negOf{\alpha}),\leq_{\Psi}) \cap \modelsOf{\alpha}=\emptyset $, equivalent to:
	\begin{equation}
	\modelsOfES{\Psi\change\alpha} \cap \modelsOf{\alpha} = \modelsOfES{\Psi} \cap \modelsOf{\alpha}
	\end{equation}
	Since $ \negOf{\beta}\models\alpha $, 		
	the set $ \modelsOfES{\Psi\change\alpha} $ contains the same models of $ \negOf{\beta} $ as $ \modelsOfES{\Psi} $, i.e.:
	\begin{equation}
	\modelsOfES{\Psi\change\alpha} \cap \modelsOf{\negOf{\beta}} = \modelsOfES{\Psi} \cap \modelsOf{\negOf{\beta}} \label{eq:CRX:6}
	\end{equation}
	Due to our assumption $ \modelsOfES{\Psi}\cap\modelsOf{\negOf{\beta}} \neq \emptyset $, this implies $ \modelsOfES{\Psi\change\alpha}\cap\modelsOf{\negOf{\beta}}\neq\emptyset $. 
	Then, because of the faithfulness of the assignment, it must be the case that $  \min(\modelsOf{\negOf{\beta}},\leq_{\Psi\change\alpha}) \subseteq \modelsOfES{\Psi\change\alpha} $. 
	Moreover, $ \min(\modelsOf{\negOf{\beta}},\leq_{\Psi\change\alpha}) $ is exactly the set of models of $ \negOf{\beta} $ contained in $ \modelsOfES{\Psi\change\alpha} $, i.e.:
	\begin{equation}
	\modelsOfES{\Psi\change\alpha} \cap \modelsOf{\negOf{\beta}} = \min(\modelsOf{\negOf{\beta}},\leq_{\Psi\change\alpha}) \label{eq:CRX:7}
	\end{equation}
	Together, Equations \eqref{eq:CRX:5}, \eqref{eq:CRX:6} and \eqref{eq:CRX:7}, imply
	Equation \eqref{eq:CR8:4} in
	this case.

	\textbf{Case 2:} We now consider the other case $ \modelsOfES{\Psi}\cap\modelsOf{\negOf{\beta}}=\emptyset $.
	Since $ \negOf{\beta}\models\alpha $ and $ \min(\modelsOf{\negOf{\alpha}},\leq_{\Psi}) $ contains no models of $ \alpha $, it must be the case that
	\begin{equation}
	\modelsOfES{\Psi}\cup\min(\modelsOf{\negOf{\beta}},\leq_{\Psi\change\alpha}) =_\alpha 
	\modelsOfES{\Psi}\cup\min(\modelsOf{\negOf{\beta}},\leq_{\Psi}) .  \label{eq:CR8:3a}
	\end{equation}
	We directly conclude from Equation \eqref{eq:CR8:3a} that Equation \eqref{eq:CR8:4} holds, which finishes the proof of Equation \eqref{eq:CR8:4}.

	Note that $ \modelsOf{\negOf{\beta}} $ has only two elements, $ \modelsOf{\negOf{\beta}}=\{\omega_1,\omega_2\} \subseteq \modelsOf{\alpha} $, and thus information about the minima provides us the relative order of the two elements $ \omega_1$ and $\omega_2 $. So, from Equation \eqref{eq:CR8:4}, we can conclude that $ \omega_1 \leq_\Psi \omega_2 $ if and only if $ \omega_1 \leq_{\Psi\change\alpha} \omega_2 $.
	\medskip
	\item[\eqref{pstl:CR9}] Suppose $ \omega_1,\omega_2\in\modelsOf{\negOf{\alpha}} $. We choose $ \beta=\negOf{(\omega_1\lor\omega_2)} $ and therefore, we have $ \alpha\models\beta $.
	By \eqref{pstl:C9} we have $ \beliefsOf{\Psi\change\alpha\change\beta}=_\negOf{\beta}\beliefsOf{\Psi\change\beta} $, which implies:
	\begin{equation}
	\modelsOfES{\Psi\change\beta} \cap \modelsOf{\negOf{\beta}} = \modelsOfES{\Psi\change\alpha\change\beta} \cap \modelsOf{\negOf{\beta}} \label{eq:ual:CR9:1}
	\end{equation}
	From Equation \eqref{eq:repr_es_contraction} we obtain that 
	\begin{align}
	\modelsOfES{\Psi\change\alpha} & =\modelsOfES{\Psi}\cup\min(\modelsOf{\negOf{\alpha}},\leq_{\Psi})  \label{eq:CR9:X1} , \\
	\modelsOfES{\Psi\change\beta} & =\modelsOfES{\Psi}\cup\min(\modelsOf{\negOf{\beta}},\leq_{\Psi})  \label{eq:CR9:3}
	\end{align}
	and employing Equation \eqref{eq:repr_es_contraction} twice yields:		
	\begin{align}
	\modelsOfES{\Psi\change\alpha\change\beta} 
	=\modelsOfES{\Psi}\cup\min(\modelsOf{\negOf{\alpha}},\leq_{\Psi})\cup\min(\modelsOf{\negOf{\beta}},\leq_{\Psi\change\alpha}) \label{eq:CR9:2}.
	\end{align} 
	Substituting \eqref{eq:CR9:2} and \eqref{eq:CR9:3}  into Equation \eqref{eq:ual:CR9:1} leads to 
\begin{multline}
	\left( \modelsOfES{\Psi}\!\cup\!\min(\modelsOf{\negOf{\beta}},\leq_{\Psi}) \right) \!\cap\! \modelsOf{\negOf{\beta}} \\
	= \left(\modelsOfES{\Psi}\!\cup\!\min(\modelsOf{\negOf{\alpha}},\leq_{\Psi})\!\cup\!\min(\modelsOf{\negOf{\beta}},\leq_{\Psi\change\alpha}) \right) \! \cap \! \modelsOf{\negOf{\beta}}. \label{eq:ual:CR9:4}
	\end{multline}
	Note that every model of $ \negOf{\beta} $ is a model of $ \negOf{\alpha} $, therefore, either one of the following holds:
	\begin{align}		\min(\modelsOf{\negOf{\alpha}},\leq_{\Psi}) \cap \modelsOf{\negOf{\beta}} & = \emptyset ,\ksOR \label{eq:ual:CR9:5} \\
	\min(\modelsOf{\negOf{\alpha}},\leq_{\Psi}) \cap \modelsOf{\negOf{\beta}} & = \min(\modelsOf{\negOf{\beta}},\leq_{\Psi})  . \label{eq:ual:CR9:6} 
	\end{align}
	In the case of \eqref{eq:ual:CR9:5}, Equation \eqref{eq:ual:CR9:4} reduces to $  
	\left({\modelsOfES{\Psi}\cup\min(\modelsOf{\negOf{\beta}},\leq_{\Psi})}\right) \cap \modelsOf{\negOf{\beta}} 
	=
	\left({\modelsOfES{\Psi}\cup\min(\modelsOf{\negOf{\beta}},\leq_{\Psi\change\alpha})}\right) \cap \modelsOf{\negOf{\beta}} $. 
	Furthermore, from \eqref{eq:ual:CR9:5} and $ \negOf{\beta}\models\negOf{\alpha} $ we conclude $ \modelsOfES{\Psi}\cap\modelsOf{\negOf{\beta}}=\emptyset $. This allows to conclude $ \min(\modelsOf{\negOf{\beta}},\leq_{\Psi\change\alpha}) = {\min(\modelsOf{\negOf{\beta}},\leq_{\Psi})} $.
	
	For the other case, the case of \eqref{eq:ual:CR9:6}, note that $ \min(\modelsOf{\negOf{\alpha}},\leq_{\Psi}) \subseteq \modelsOfES{\Psi \change \alpha} $. 
	By Equation \eqref{eq:ual:CR9:6} it must hold that $ {\min(\modelsOf{\negOf{\beta}},\leq_{\Psi})} \subseteq {\modelsOfES{\Psi \change \alpha}}  $. By the faithfulness of $ \Psi\mapsto\leq_{\Psi} $ (in particular condition \eqref{pstl:FA1} in Definition \ref{def:faithful_assignment}) we have $ \min(\modelsOf{\negOf{\beta}},\leq_{\Psi})=\min(\modelsOf{\negOf{\beta}},\leq_{\Psi\change\alpha}) $, because the minimal models of $ \negOf{\beta} $ with respect to $ \leq_{\Psi\change\alpha} $ are contained in $ \modelsOfES{\Psi \change \alpha} $ by \eqref{eq:CR9:X1} and \eqref{eq:ual:CR9:6}.
	
	In summary, in both cases, \eqref{eq:ual:CR9:5} and \eqref{eq:ual:CR9:6}, we can conclude:
	\begin{equation}
	\min(\modelsOf{\negOf{\beta}},\leq_{\Psi\change\alpha}) = \min(\modelsOf{\negOf{\beta}},\leq_{\Psi}) \label{eq:ual:CR9:7}
	\end{equation}
	Note that $ \modelsOf{\negOf{\beta}} $ has only two elements, $ \modelsOf{\negOf{\beta}}=\{\omega_1,\omega_2\} \subseteq \modelsOf{\negOf{\alpha}} $, and thus information about the minima provides us the relative order of the two elements $ \omega_1$ and $\omega_2 $. From Equation \eqref{eq:ual:CR9:7} we can conclude that $ \omega_1 \leq_\Psi \omega_2 $ if and only if $ \omega_1 \leq_{\Psi\change\alpha} \omega_2 $.\qedhere
\end{description}
\endgroup \end{proof}

One might wonder, why we not simply use the following postulate as syntactic counterpart to 
\eqref{pstl:CR9}:
	\begin{description}
	\item[\normalfont(\textlabel{C9$ ^\prime $}{pstl:C9dash})]\( \ksIF \alpha\models\beta	\ksTHEN \beliefsOf{\Psi\change\alpha\change\beta} =_\negOf{\alpha} \beliefsOf{\Psi\change\beta} \) 
\end{description}
The following proposition shows that this would not hold.
\begin{proposition}
	There is an AGM contraction operator for epistemic states that satisfies \eqref{pstl:CR9} but violates \eqref{pstl:C9dash}.
\end{proposition}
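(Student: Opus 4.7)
The plan is to exhibit a small counterexample over a two-variable signature. I would take $\Sigma = \{a,b\}$ with four worlds $\omega_1 = ab$, $\omega_2 = a\bar{b}$, $\omega_3 = \bar{a}b$, $\omega_4 = \bar{a}\bar{b}$, and choose $\alpha = a$ and $\beta = a \lor b$, so that $\alpha \models \beta$, $\modelsOf{\negOf{\alpha}} = \{\omega_3,\omega_4\}$, and $\modelsOf{\negOf{\beta}} = \{\omega_4\}$. The initial epistemic state $\Psi$ is chosen so that its faithful preorder satisfies $\omega_1 \simeq_\Psi \omega_2 <_\Psi \omega_3 <_\Psi \omega_4$, and hence $\modelsOfES{\Psi} = \{\omega_1,\omega_2\}$.

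For the operator itself, I would invoke Proposition \ref{prop:it_es_contraction} to pick any AGM contraction operator $\change$ whose faithful assignment satisfies \eqref{pstl:CR8}--\eqref{pstl:CR11} (for example, a natural-style iterated contraction that leaves the relative order of all worlds intact except promoting $\min(\modelsOf{\negOf{\gamma}},\leq_\Psi)$ to the most plausible layer). Since \eqref{pstl:CR9} is among these postulates, this operator automatically satisfies \eqref{pstl:CR9}.

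The main computation then applies Equation \eqref{eq:repr_es_contraction} twice. On the one hand, $\modelsOfES{\Psi\change\beta} = \modelsOfES{\Psi} \cup \min(\modelsOf{\negOf{\beta}},\leq_\Psi) = \{\omega_1,\omega_2,\omega_4\}$, so $\modelsOfES{\Psi\change\beta} \cap \modelsOf{\negOf{\alpha}} = \{\omega_4\}$. On the other hand, $\modelsOfES{\Psi\change\alpha} = \{\omega_1,\omega_2,\omega_3\}$, and by \eqref{pstl:CR9} the order $\omega_3 <_{\Psi\change\alpha} \omega_4$ is preserved, so $\min(\modelsOf{\negOf{\beta}},\leq_{\Psi\change\alpha}) = \{\omega_4\}$, giving $\modelsOfES{\Psi\change\alpha\change\beta} = \{\omega_1,\omega_2,\omega_3,\omega_4\}$, whose intersection with $\modelsOf{\negOf{\alpha}}$ is $\{\omega_3,\omega_4\}$. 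Since $\{\omega_4\} \neq \{\omega_3,\omega_4\}$, we obtain $\beliefsOf{\Psi\change\alpha\change\beta} \neq_{\negOf{\alpha}} \beliefsOf{\Psi\change\beta}$, contradicting \eqref{pstl:C9dash}.

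The main obstacle is not the arithmetic, which is immediate, but rather making sure that a single operator can globally satisfy \eqref{pstl:CR9} on all inputs while still producing this local failure of \eqref{pstl:C9dash}; this is dispensed with by appealing to Proposition \ref{prop:it_es_contraction} and any standard iterated-contraction construction. Conceptually, the example exploits the fact that the contraction by $\alpha$ first admits $\omega_3 \in \modelsOf{\negOf{\alpha}} \setminus \modelsOf{\negOf{\beta}}$ into the belief set, and $\omega_3$ persists after a further contraction with $\beta$ because $\modelsOfES{\Psi\change\alpha} \subseteq \modelsOfES{\Psi\change\alpha\change\beta}$, whereas $\omega_3$ is never introduced in $\modelsOfES{\Psi\change\beta}$; the discrepancy lies entirely within $\modelsOf{\negOf{\alpha}}$, which is precisely where \eqref{pstl:C9dash} would forbid it.
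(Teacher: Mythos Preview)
Your proof is correct and follows essentially the same approach as the paper: both use the signature $\{a,b\}$ with $\alpha=a$ and $\beta=a\lor b$, and compute the two sides of \eqref{pstl:C9dash} via Equation~\eqref{eq:repr_es_contraction} to exhibit a discrepancy among the $\negOf{\alpha}$-models. Your initial preorder differs slightly from the paper's (the paper starts with $ab$ alone as most plausible and the other three worlds tied), and your appeal to Proposition~\ref{prop:it_es_contraction} to obtain an operator globally satisfying \eqref{pstl:CR9} is a cleaner justification than the paper's direct table specification, but the underlying idea is the same.
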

\begin{proof}
	Consider the case where $ \alpha=a $ and $ \beta=a\lor b $. Clearly, we have $ \alpha\models\beta $ in this case.
	Table \ref{tbl:exmpl_cr9_contraction} specifies a faithful preorder for the epistemic state $ \Psi $ and for states after contraction with $ \alpha $ and $ \beta $.
	Note that none of the changes in Table \ref{tbl:exmpl_cr9_contraction} violates \eqref{pstl:CR9}. 
	But the most plausible models in $ \leq_{\Psi\change a \change (a\lor b)} $ and $ \leq_{\Psi\change (a\lor b)} $ contain different models of $ \negOf{\alpha} $. This implies that $ \beliefsOf{\Psi\change\alpha\change\beta} \neq_\negOf{\alpha} \beliefsOf{\Psi\change\beta} $, which is a violation of \eqref{pstl:C9dash}.
\end{proof}
\begin{table}
	\begin{center}
	\begin{tabular}{c|c||c|c||c}
		\toprule
		   State    &                          $ \Psi $                          &               $ \Psi \change a $                &      $ \Psi \change a \change (a\lor b) $       &    $ \Psi \change (a\lor b) $    \\ \midrule
		    TPO     &                      $ \leq_{\Psi} $                       &            $ \leq_{\Psi \change a} $            &   $ \leq_{\Psi\change a \change (a\lor b)} $    & $ \leq_{\Psi\change (a\lor b)} $ \\ \midrule\midrule
		implausible & $ a\overline{b}\ \overline{a}b\ \overline{a}\overline{b} $ &                $ a\overline{b} $                &                $ a\overline{b} $                & $ a\overline{b}\ \overline{a}b $ \\
		 plausible  &                           $ ab $                           & $ ab\ \overline{a}b\ \overline{a}\overline{b} $ & $ ab\ \overline{a}b\ \overline{a}\overline{b} $ & $ ab\ \overline{a}\overline{b} $ \\ \bottomrule
	\end{tabular} 		\caption{Example for the incompatibility between \eqref{pstl:CR9} and \eqref{pstl:C9dash}.}
		\label{tbl:exmpl_cr9_contraction}
	\end{center}
\end{table}

	The correspondence between contractionals and contractions from Section \ref{sec:conditional_contraction} and Proposition \ref{prop:eq_contractional_correspondence} allows us to give a conditional formulation of the postulates \eqref{pstl:C8} and \eqref{pstl:C9}:
\begin{description}
	\item[\normalfont(\textlabel{C8$_\text{cond}$}{pstl:C8cond})]\( \ksIF \negOf{\alpha}\models\beta    \ksTHEN  \Psi\!\change\!\alpha \!\models\! \contractionCond{\gamma\!\lor\! \negOf{\alpha} }{\beta}   \!\Leftrightarrow\! \Psi \!\models\! \contractionCond{\gamma\lor \negOf{\alpha}}{\beta}   \)\\
	\item[\normalfont(\textlabel{C9$_\text{cond}$}{pstl:C9cond})]\( \ksIF \alpha\models\beta             \ksTHEN   \Psi\!\change\!\alpha \models \contractionCond{\gamma\lor{\beta}}{\beta}  \Leftrightarrow \Psi \models \contractionCond{\gamma\lor{\beta}}{\beta} \) \\				
\end{description}

\noindent  We close with a formal statement about the interrelationship between the conditional and non-conditional variant for these postulates.
\vspace{-0.2cm}
\begin{proposition}\label{cor:c8_c9_conditional}
	Let $ \change $ be an AGM contraction operator for epistemic states. Then, \eqref{pstl:C8}, respectively \eqref{pstl:C9}, is satisfied by $ \change $ if and only if $ \eqref{pstl:C8cond} $, respectively \eqref{pstl:C9cond}, is satisfied.
\end{proposition}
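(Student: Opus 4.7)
The plan is to recognise that this proposition is essentially a direct application of Proposition \ref{prop:eq_contractional_correspondence}, which already bridges $\alpha$-equivalence of belief sets (after a shared contraction by $\beta$) and acceptance of contractionals of the form $\contractionCond{\alpha\rightarrow \gamma}{\beta}$. Both pairs (\ref{pstl:C8}/\ref{pstl:C8cond}) and (\ref{pstl:C9}/\ref{pstl:C9cond}) share the same premise ($\negOf{\alpha}\models\beta$ or $\alpha\models\beta$, respectively), so these hypotheses simply carry through from one side to the other; the proof reduces to showing that the conclusion of the non-conditional postulate and the conclusion of the conditional postulate are, under our correspondence, literally the same statement.

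For \ref{pstl:C8}, I would instantiate Proposition \ref{prop:eq_contractional_correspondence} with the two epistemic states $\Psi\change\alpha$ and $\Psi$, parameter proposition $\alpha$, and contracted proposition $\beta$. This yields that $\beliefsOf{\Psi\change\alpha\change\beta} =_\alpha \beliefsOf{\Psi\change\beta}$ is equivalent to saying that for every $\gamma$ we have $\Psi\change\alpha \models \contractionCond{\alpha\rightarrow\gamma}{\beta} \Leftrightarrow \Psi\models \contractionCond{\alpha\rightarrow\gamma}{\beta}$. Since $\alpha\rightarrow\gamma\equiv\gamma\lor\negOf{\alpha}$, this rewrites exactly to the consequent of \ref{pstl:C8cond}. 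Taking the implication under the shared premise $\negOf{\alpha}\models\beta$ closes this half of the argument in both directions.

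For \ref{pstl:C9}, I would use the same proposition but with parameter proposition $\negOf{\beta}$: the statement $\beliefsOf{\Psi\change\alpha\change\beta} =_{\negOf{\beta}} \beliefsOf{\Psi\change\beta}$ is equivalent to the equivalence of acceptance of $\contractionCond{\negOf{\beta}\rightarrow\gamma}{\beta}$ in $\Psi\change\alpha$ and in $\Psi$ for every $\gamma$. Using $\negOf{\beta}\rightarrow\gamma\equiv\gamma\lor\beta$, we get exactly the consequent of \ref{pstl:C9cond}. Again, the hypothesis $\alpha\models\beta$ is identical in both sides and just carries through.

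The only subtlety, which I expect to be the sole non-routine step, is checking the propositional rewriting of the conditional consequent: one must observe that the specific form $\gamma\lor\negOf{\alpha}$ in \ref{pstl:C8cond} and $\gamma\lor\beta$ in \ref{pstl:C9cond} was chosen precisely so that they match $\alpha\rightarrow\gamma$ and $\negOf{\beta}\rightarrow\gamma$, which are the templates of conditionals supplied by Proposition \ref{prop:eq_contractional_correspondence} for the respective equivalence parameters $\alpha$ and $\negOf{\beta}$. Once this alignment is made explicit, the biconditional in each case is immediate, so the proof can be kept very short by simply citing Proposition \ref{prop:eq_contractional_correspondence} twice with the appropriate instantiations.
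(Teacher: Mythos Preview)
Your proposal is correct and matches the paper's own approach: the paper does not spell out a proof for this proposition but explicitly introduces \eqref{pstl:C8cond} and \eqref{pstl:C9cond} by saying that Proposition~\ref{prop:eq_contractional_correspondence} ``allows us to give a conditional formulation of the postulates \eqref{pstl:C8} and \eqref{pstl:C9}'', which is precisely the instantiation you carry out. Your observation that $\alpha\to\gamma\equiv\gamma\lor\negOf{\alpha}$ and $\negOf{\beta}\to\gamma\equiv\gamma\lor\beta$ is exactly the alignment the paper relies on implicitly, so nothing more is needed.
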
 
\subsection{Syntactic Postulates for \eqref{pstl:CR10} and \eqref{pstl:CR11}}
\label{sec:itr_contractionals_1011}
The postulates \eqref{pstl:CR10} and \eqref{pstl:CR11}
both ensure that by a contraction with $ \alpha $,
models of $ \alpha $ should not be improved with respect to models of $ \negOf{\alpha} $.
In the context of AGM contractions for epistemic states we use here, this is expressed by the following postulates:
	\begin{description}
		\item[\normalfont(\textlabel{C10$_\text{cond}$}{pstl:C10cond})]\( \ksIF \negOf{\alpha}\models\gamma \ksTHEN \Psi \models \contractionCond{\gamma}{\beta}  %
		\text{  implies  }
		\Psi\change\alpha  \models \contractionCond{\gamma}{\beta} \)\\[0.2em]
		\emph{Explanation:} A contraction with $ \alpha $ preserves the acceptance of a contractional if its conclusion $ \gamma $ is more general than %
		$ \negOf{\alpha} $.
	\item[\normalfont(\textlabel{C11$_\text{cond}$}{pstl:C11cond})]\( \ksIF \alpha\models\gamma \ksTHEN  \Psi\change\alpha  \models \contractionCond{\gamma}{\beta} %
	\text{ implies }
	\Psi \models \contractionCond{\gamma}{\beta} \) \\[0.2em]
	\emph{Explanation:} If a contractional whose conclusion $ \gamma $ is more general than $ \alpha $ is accepted after a contraction with $ \alpha $, then the contractional should be accepted previously.
\end{description}

	By using contraposition and the correspondence between contractionals and contractions, the following non-conditional formulation of the principles \eqref{pstl:C10cond} and \eqref{pstl:C11cond} can be obtained:
		\begin{description}
		\item[\normalfont(\textlabel{C10}{pstl:C10})]\( \ksIF \negOf{\alpha}\models\gamma \ksTHEN  \Psi\change\beta \models \gamma \ \Rightarrow\  \Psi\change\alpha \change\beta \models \gamma \) 	\\

		\vspace{-0.3cm}
		\item[\normalfont(\textlabel{C11}{pstl:C11})]\( \ksIF 
		\alpha\models\gamma
		\ksTHEN \Psi \change\alpha \change\beta \models \gamma \Rightarrow \Psi\change\beta \models \gamma  \) \\
	\end{description}

\vspace{-0.3cm}

Note that AGM contractions for epistemic states fulfil the inclusion postulate \eqref{pstl:C1}, and therefore no contraction can add additional beliefs. 
The postulates \eqref{pstl:C10} and \eqref{pstl:C11} constrain further which beliefs should be retained. The postulate \eqref{pstl:C10} ensures that a contraction with $ \alpha $ does not internally give up beliefs. 
The postulate \eqref{pstl:C11} is more difficult, stating that if two contractions do not withdraw a belief $ \gamma $, then the second contraction only does not withdraw $ \gamma $.

The following proposition states the connection between \eqref{pstl:C10cond} and \eqref{pstl:C10}, and between \eqref{pstl:C11cond} and \eqref{pstl:C11}. 
\begin{proposition}\label{cor:c10_c11_conditional}
	Let $ \change $ be an AGM contraction operator for epistemic states. Then \eqref{pstl:C10cond}, respectively \eqref{pstl:C11cond}, is satisfied $ \change $ if and only if \eqref{pstl:C10cond}, respectively \eqref{pstl:C11cond}, is fulfilled.
\end{proposition}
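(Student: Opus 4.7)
The plan is to observe that this proposition is essentially a definitional unfolding via the correspondence \eqref{contractionConditional}: for any epistemic state $\Phi$ and propositions $\delta,\eta$, we have $\Phi \models \contractionCond{\eta}{\delta}$ precisely when $\Phi \change \delta \models \eta$. Applying this equivalence to both the antecedent and the consequent of each conditional postulate directly yields the corresponding non-conditional postulate. Hence the proof does not require the semantic machinery of faithful assignments — it is a syntactic rewriting.

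Concretely, for the equivalence of \eqref{pstl:C10cond} and \eqref{pstl:C10}, I would first substitute the left-hand contractional in \eqref{pstl:C10cond} using $\Psi \models \contractionCond{\gamma}{\beta} \iff \Psi \change \beta \models \gamma$, and similarly rewrite $\Psi \change \alpha \models \contractionCond{\gamma}{\beta}$ as $\Psi \change \alpha \change \beta \models \gamma$ (here $\Psi \change \alpha$ plays the role of the epistemic state on which the contractional is evaluated). After both substitutions, the implication in \eqref{pstl:C10cond} reads exactly: if $\negOf{\alpha}\models\gamma$, then $\Psi \change \beta \models \gamma \Rightarrow \Psi \change \alpha \change \beta \models \gamma$, which is the statement of \eqref{pstl:C10}. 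Since \eqref{contractionConditional} is a definitional biconditional, the rewriting goes in both directions, giving the required equivalence.

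For \eqref{pstl:C11cond} versus \eqref{pstl:C11} the argument is the same: rewriting $\Psi \change \alpha \models \contractionCond{\gamma}{\beta}$ as $\Psi \change \alpha \change \beta \models \gamma$ and $\Psi \models \contractionCond{\gamma}{\beta}$ as $\Psi \change \beta \models \gamma$ turns \eqref{pstl:C11cond} into precisely \eqref{pstl:C11}. In both cases no property beyond \eqref{contractionConditional} is invoked, so the assumption that $\change$ is an AGM contraction operator for epistemic states is used only to guarantee that the contraction is defined on every state that appears and that the notation $\contractionCond{\cdot}{\cdot}$ is meaningful.

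There is no real obstacle to overcome; the only care needed is to make sure the contractional $\Psi \change \alpha \models \contractionCond{\gamma}{\beta}$ is unfolded with respect to the state $\Psi \change \alpha$, not $\Psi$, so that iterated contraction $\Psi \change \alpha \change \beta$ appears on the right side. Once this bookkeeping is done, the two statements collapse to the same formula, so the proof reduces to at most a couple of lines per postulate and could even be folded into a single remark that the conditional and non-conditional formulations are syntactic variants of each other modulo \eqref{contractionConditional}.
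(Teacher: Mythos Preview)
Your proposal is correct and matches the paper's own treatment: the paper does not spell out a proof either but simply notes, just before stating \eqref{pstl:C10} and \eqref{pstl:C11}, that these are obtained from \eqref{pstl:C10cond} and \eqref{pstl:C11cond} ``by using contraposition and the correspondence between contractionals and contractions''. Your observation that the equivalence is a direct definitional unfolding via \eqref{contractionConditional}, applied once with the base state $\Psi$ and once with the state $\Psi\change\alpha$, is exactly this correspondence; the contraposition the paper alludes to is not actually needed for the equivalence itself.
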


	We show for the non-conditional postulates \eqref{pstl:C10} and \eqref{pstl:C10} that they are related to \eqref{pstl:CR10} and \eqref{pstl:CR11}.
	\begin{proposition}\label{prop:eqrelated_c10_c11}
		Let $ \change $  be a belief change operator, satisfying \eqref{pstl:C1} to \eqref{pstl:C9}. Then the following statements are equivalent:
		\begin{enumerate}[(a)]
			\item The operator $ \change $ satisfies the postulates \eqref{pstl:C10} and \eqref{pstl:C11}
			\item There is a faithful assignment $ \Psi\mapsto\leq_{\Psi} $ related to $ \change $ by \eqref{eq:repr_es_contraction} such that \eqref{pstl:CR10} and \eqref{pstl:CR11} are satisfied.
		\end{enumerate}
	\end{proposition}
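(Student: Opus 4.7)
The plan is to adapt the pattern of Proposition \ref{prop:eqrelated_c8_c9} and prove both directions separately. A useful preliminary observation is that since $\change$ is assumed to satisfy \eqref{pstl:C8} and \eqref{pstl:C9}, Part II of the proof of Proposition \ref{prop:eqrelated_c8_c9} shows that \emph{any} faithful assignment $\Psi\mapsto\leq_\Psi$ related to $\change$ via \eqref{eq:repr_es_contraction} automatically satisfies \eqref{pstl:CR8} and \eqref{pstl:CR9}. I therefore have access to all of \eqref{pstl:CR8}--\eqref{pstl:CR11} on the (b)-side and to all of \eqref{pstl:C8}--\eqref{pstl:C11} on the (a)-side.

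For the direction (b)$\,\Rightarrow\,$(a), I would expand $\modelsOfES{\Psi\change\alpha\change\beta}$ and $\modelsOfES{\Psi\change\beta}$ using \eqref{eq:repr_es_contraction} and test membership in $\modelsOf{\gamma}$. For \eqref{pstl:C10}, assuming $\negOf{\alpha}\models\gamma$ and $\Psi\change\beta\models\gamma$, the pieces $\modelsOfES{\Psi}$ and $\min(\modelsOf{\negOf{\alpha}},\leq_\Psi)$ of $\modelsOfES{\Psi\change\alpha\change\beta}$ are trivially inside $\modelsOf{\gamma}$; the crux is $\min(\modelsOf{\negOf{\beta}},\leq_{\Psi\change\alpha})$, where I would argue that every $\omega$ in this set lying in $\modelsOf{\alpha}$ must also be $\leq_\Psi$-minimal in $\modelsOf{\negOf{\beta}}$, since any strictly more plausible $\negOf{\beta}$-world $\omega'$ under $\leq_\Psi$ would be lifted to $\omega'<_{\Psi\change\alpha}\omega$ by \eqref{pstl:CR8} (case $\omega'\in\modelsOf{\alpha}$) or \eqref{pstl:CR10} (case $\omega'\in\modelsOf{\negOf{\alpha}}$), contradicting minimality under $\leq_{\Psi\change\alpha}$. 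The argument for \eqref{pstl:C11} is symmetric: I would show every $\omega\in\min(\modelsOf{\negOf{\beta}},\leq_\Psi)\cap\modelsOf{\negOf{\alpha}}$ is also minimal under $\leq_{\Psi\change\alpha}$, transporting the relevant $\leq_\Psi$-inequalities forward via \eqref{pstl:CR9} (for $\negOf{\alpha}$-witnesses) and \eqref{pstl:CR11} (for $\alpha$-witnesses).

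For the direction (a)$\,\Rightarrow\,$(b), I would reuse the two-world trick from Proposition \ref{prop:eqrelated_c8_c9}. Given $\omega_1\in\modelsOf{\negOf{\alpha}}$ and $\omega_2\in\modelsOf{\alpha}$, set $\beta=\negOf{(\omega_1\lor\omega_2)}$ so that $\modelsOf{\negOf{\beta}}=\{\omega_1,\omega_2\}$ and the minima under $\leq_\Psi$ and $\leq_{\Psi\change\alpha}$ encode exactly the pairwise order of $\omega_1,\omega_2$. For \eqref{pstl:CR10}, assuming $\omega_1<_\Psi\omega_2$, I would choose $\gamma$ with $\modelsOf{\gamma}=\modelsOfES{\Psi}\cup\modelsOf{\negOf{\alpha}}$; the premises of \eqref{pstl:C10} are met, and its conclusion, together with faithfulness (which excludes $\omega_2$ from $\modelsOfES{\Psi}$ as $\omega_2$ is not $\leq_\Psi$-minimal), forces $\omega_2\notin\min(\modelsOf{\negOf{\beta}},\leq_{\Psi\change\alpha})$, yielding $\omega_1<_{\Psi\change\alpha}\omega_2$. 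For \eqref{pstl:CR11}, assuming $\omega_1\leq_\Psi\omega_2$ and aiming at a contradiction with $\omega_2<_{\Psi\change\alpha}\omega_1$, I would first use faithfulness to dispose of the easy subcases where $\omega_1\in\modelsOfES{\Psi}\cup\min(\modelsOf{\negOf{\alpha}},\leq_\Psi)$ (there $\omega_1\in\modelsOfES{\Psi\change\alpha}$, precluding $\omega_2<_{\Psi\change\alpha}\omega_1$), and in the remaining case choose $\gamma$ with $\modelsOf{\gamma}=\modelsOfES{\Psi}\cup\min(\modelsOf{\negOf{\alpha}},\leq_\Psi)\cup\modelsOf{\alpha}$ to conclude $\Psi\change\alpha\change\beta\models\gamma$ together with $\Psi\change\beta\not\models\gamma$, contradicting \eqref{pstl:C11}.

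The main obstacle will be the careful choice of $\gamma$ in the (a)$\,\Rightarrow\,$(b) direction: it must simultaneously satisfy the formula-level premise of \eqref{pstl:C10} or \eqref{pstl:C11}, cover every additional minimum introduced by the nested contraction $\change\alpha\change\beta$, and still exclude the witness whose ordering is being pinned down. The faithfulness-based preliminary case split for \eqref{pstl:CR11} is precisely what keeps such a $\gamma$ available without a direct clash with the $\min(\modelsOf{\negOf{\alpha}},\leq_\Psi)$-part absorbed into $\modelsOf{\gamma}$.
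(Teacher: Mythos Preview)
Your proposal is correct and follows essentially the same route as the paper: both directions use the expansion of $\modelsOfES{\Psi\change\alpha\change\beta}$ via \eqref{eq:repr_es_contraction}, the same case analysis on $\modelsOf{\alpha}$ versus $\modelsOf{\negOf{\alpha}}$ invoking \eqref{pstl:CR8}/\eqref{pstl:CR10} for \eqref{pstl:C10} and \eqref{pstl:CR9}/\eqref{pstl:CR11} for \eqref{pstl:C11}, and the two-world choice $\beta=\negOf{(\omega_1\lor\omega_2)}$ with the same $\gamma$'s (your $\modelsOf{\gamma}=\modelsOfES{\Psi}\cup\modelsOf{\negOf{\alpha}}$ and $\modelsOf{\gamma}=\modelsOfES{\Psi}\cup\min(\modelsOf{\negOf{\alpha}},\leq_\Psi)\cup\modelsOf{\alpha}$ coincide with the paper's after simplification). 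The only cosmetic differences are that for \eqref{pstl:C11} in the (b)$\Rightarrow$(a) direction you argue the contrapositive of the paper's formulation, and for \eqref{pstl:CR11} in the (a)$\Rightarrow$(b) direction your preliminary case split ``$\omega_1\in\modelsOfES{\Psi\change\alpha}$'' is what the paper obtains in one line from faithfulness applied directly to $\omega_2<_{\Psi\change\alpha}\omega_1$.
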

\begin{proof}
\begingroup%
In the following, we will show that (b) implies (a) and that (a) implies (b).

\textbf{Part I:} We show the (b) to (a) direction.
Let $ \change $ be a belief change operator fulfilling \eqref{pstl:C1} to \eqref{pstl:C9}. By Proposition \ref{prop:eqrelated_c8_c9}, there is a faithful assignment $ \Psi\mapsto\leq_{\Psi} $ related to $ \change $ by \eqref{eq:repr_es_contraction} such that  \eqref{pstl:CR8} and \eqref{pstl:CR9} are fulfilled.
We will show that \eqref{pstl:C10} and \eqref{pstl:C11} are satisfied.
\begin{description}
	\item[\eqref{pstl:C10}] 
	Let $ \negOf{\alpha}\models \gamma $ and $ \Psi\change\beta\models\gamma $. 
	We want to show that $ \Psi\change\alpha\change\beta\models\gamma $ holds.
	By Equation \eqref{eq:repr_es_contraction} we obtain
	\begin{equation}
	\modelsOfES{\Psi\change\beta} = \modelsOfES{\Psi} \cup \min( \modelsOf{\negOf{\beta}} , \leq_{\Psi} ) \subseteq \modelsOf{\gamma}
	\label{eq:proof:C11:1}
	\end{equation}
	and with Equation \eqref{eq:proof:C8:4} which holds for every $ \alpha,\beta $ we get:
	\begin{align}
	\modelsOfES{\Psi\!\change\!\alpha\!\change\!\beta} 
	& \!=\!  \modelsOfES{\Psi} \!\cup\! \min( \modelsOf{\negOf{\alpha}} , \leq_{\Psi} ) \!\cup\! \min( \modelsOf{\negOf{\beta}} , \leq_{\Psi\change\alpha} ) 
	.
	\label{eq:proof:C11:2}
	\end{align}
	We show that every $ \omega \in \modelsOfES{\Psi\change\alpha\change\beta} $ is a model of $ \gamma $. 
	By Equation \eqref{eq:proof:C11:2} either $ \omega\in\modelsOfES{\Psi} $, $ \omega\in\min( \modelsOf{\negOf{\alpha}} , \leq_{\Psi} ) $ or $ \omega\in\min( \modelsOf{\negOf{\beta}} , \leq_{\Psi\change\alpha} ) $. 
	For these three cases we have:
	\begin{itemize}
		\item If $ \omega\in\modelsOfES{\Psi} $, then by Equation \eqref{eq:proof:C11:1} we have $ \omega\models\gamma $.
		\item If $ \omega \in \min( \modelsOf{\negOf{\alpha}} , \leq_{\Psi\change\alpha} ) $, then by the assumption $ \negOf{\alpha}\models\gamma $ we have $ \omega\models\gamma $.
		\item For $ \omega \in \min( \modelsOf{\negOf{\beta}} , \leq_{\Psi\change\alpha} ) $ assume that $ \omega\models\negOf{\gamma} $. If $ \omega\models\negOf{\alpha} $, then $ \omega\models\gamma $ by the assumption $ \negOf{\alpha}\models \gamma $.
		Therefore, we can safely assume $ \omega\models\alpha $.
		Since $ \min( \modelsOf{\negOf{\beta}} , \leq_{\Psi} ) \subseteq \modelsOf{\gamma} $, there must be $ \omega_1\in \min( \modelsOf{\negOf{\beta}} , \leq_{\Psi} ) $ such that $ \omega_1 <_\Psi \omega $.
		If $ \omega_1,\omega \in \modelsOf{\alpha} $, then $ \omega_1 <_{\Psi\change\alpha} \omega $ by \eqref{pstl:CR8}.
		For $ \omega_1 \in \modelsOf{\negOf{\alpha}} $ and $ \omega \in \modelsOf{{\alpha}} $ we conclude $ \omega_1 <_{\Psi\change\alpha} \omega $ by \eqref{pstl:CR10}.
		Thus, it must be the case that $ \omega_1 <_{\Psi\change\alpha} \omega $, which is a contradiction to the minimality of $ \omega $ with respect to $ \leq_{\Psi\change\alpha} $.
	\end{itemize}
	Equation \eqref{eq:proof:C11:2} implies that $ \omega \models \gamma $, and therefore $ \Psi\change\alpha\change\beta\models\gamma $.
	\medskip
	\item[\eqref{pstl:C11}] Let $ \alpha,\beta,\gamma $ be such that 
	$ \alpha\models\gamma $
	and $ \Psi\change\alpha\change\beta\models\gamma $. 
	We want to show $ \Psi\change\beta\models\gamma $.
	By Equation \eqref{eq:repr_es_contraction} we have $ \modelsOfES{\Psi}\subseteq\modelsOf{\gamma} $ and $ {\min(\modelsOf{\negOf{\beta}},\leq_{\Psi\change\alpha})\subseteq\modelsOf{\gamma}} $.
	Now let $ \omega_1\in\modelsOf{\negOf{\beta}} $ such that $ \omega_1 \notin \min(\modelsOf{\negOf{\beta}},\leq_{\Psi\change\alpha}) $. We show that $ \omega_1\notin \min(\modelsOf{\negOf{\beta}},\leq_{\Psi}) $ or $ \omega_1\models\gamma $. Let $ \omega_2\in \min(\modelsOf{\negOf{\beta}},\leq_{\Psi\change\alpha}) $ and thus, $ \omega_2 <_{\Psi\change\alpha} \omega_1 $. We differentiate by cases:
	\begin{enumerate}
		\item For $ \omega_1\in\modelsOf{\alpha} $  the assumption $ \alpha\models\gamma $ immediately yields $ \omega\models\gamma $.
		\item In the case of $ \omega_1\in\modelsOf{\negOf{\alpha}} $ and $ \omega_2\in\modelsOf{\alpha} $ we conclude $ \omega_2 <_{\Psi} \omega_1 $ by contraposition of \eqref{pstl:CR11}.
		\item In the case of $ \omega_1,\omega_2\in\modelsOf{\negOf{\alpha}} $ we conclude by \eqref{pstl:CR9} that $ \omega_2 <_{\Psi} \omega_1 $ holds.
	\end{enumerate}
	This shows that either $ \omega_2 <_\Psi \omega_1 $ or $ \omega_1\models\gamma $.
	The first case implies $ \omega_1\notin \min( \modelsOf{\negOf{\beta}} ,\leq_{\Psi} ) $, and thus yields $ {\min(\modelsOf{\negOf{\beta}},\leq_{\Psi})} \subseteq \modelsOf{\gamma} $.
	In summary, we have $ \modelsOfES{\Psi\change\beta} = \modelsOfES{\Psi} \cup {\min(\modelsOf{\negOf{\beta}},\leq_{\Psi})} \subseteq \modelsOf{\gamma} $.
\end{description}

\textbf{Part II:} We show the (a) to (b) direction.
Suppose that $ \change $ is a belief change operator that satisfies \eqref{pstl:C1} to \eqref{pstl:C11}.
By Proposition \ref{prop:es_contraction} there exists a faithful assignment $ \Psi\mapsto \leq_\Psi $ such that for every proposition $ \alpha $ Equation \eqref{eq:repr_es_contraction} holds.
We will show that $ \Psi\mapsto \leq_\Psi $ satisfies \eqref{pstl:CR10} and \eqref{pstl:CR11}.
\begin{description}
	\item[\eqref{pstl:CR10}] 
	Suppose $ \omega_1\in\modelsOf{\negOf{\alpha}} $, $ \omega_2\in\modelsOf{{\alpha}} $ and $ \omega_1 <_{\Psi}\omega_2 $. We want to show $ \omega_1 <_{\Psi\change\alpha} \omega_2 $.
	For this purpose let $ \beta=\negOf{(\omega_1\lor\omega_2)} $. 
	Since $ \Psi\mapsto\leq_{\Psi} $ is a faithful assignment (especially by \eqref{pstl:FA2}) it must be the case that $ \omega_2\notin \modelsOfES{\Psi} $.
	By use of Equation \eqref{eq:repr_es_contraction} we can conclude that $ \omega_2\notin\modelsOfES{\Psi\change\beta} $ and $ \omega_1\in\modelsOfES{\Psi\change\beta} $.
	Now let $ \gamma=\gamma'\lor\negOf{\alpha} $, where $ \gamma' $ is a formula such that $ \modelsOfES{\Psi\change\beta}\cup\{ \omega_1 \}= \modelsOf{\gamma'} $.
		Thus, $ \negOf{\alpha} \models \gamma $, whereby $ \omega_1\models\gamma $ and $ \omega_2\not\models\gamma $. From \eqref{pstl:C10} we conclude $ \Psi\change\alpha\change\beta\models \gamma $.
		This implies that $ \omega_2\notin \modelsOfES{\Psi\change\alpha\change\beta} $. Note that $ \modelsOf{\negOf{\beta}}=\{\omega_1,\omega_2\} $ and thus, by Equation \eqref{eq:repr_es_contraction} it must be the case that $ \omega_1\in \modelsOfES{\Psi\change\alpha\change\beta} $ or $ \omega_2\in \modelsOfES{\Psi\change\alpha\change\beta} $.
		Since the latter leads to a contradiction, we conclude $ \omega_1\in \modelsOfES{\Psi\change\alpha\change\beta} $, and in summary $ \omega_1 <_{\Psi\change\alpha} \omega_2 $.
	\item[\eqref{pstl:CR11}] We show \eqref{pstl:CR11} by contraposition. 
	Suppose $ \omega_1\in\modelsOf{\negOf{\alpha}} $, $ \omega_2\in\modelsOf{{\alpha}} $ and $ \omega_2 <_{\Psi\change\alpha}\omega_1 $. 
	We will show $ \omega_2 <_{\Psi}\omega_1 $.
	Since $ \Psi\mapsto\leq_{\Psi} $ is a faithful assignment, and thus fulfils especially \eqref{pstl:FA2},
	it must be the case that $ \omega_1\notin \modelsOfES{\Psi\change\alpha} $.
	For $ \beta=\negOf{(\omega_1\lor\omega_2)} $ we can conclude
	by Equation \eqref{eq:repr_es_contraction} that $ \modelsOfES{\Psi\change\alpha\change\beta}=\modelsOfES{\Psi\change\alpha} \cup \min( \modelsOf{\negOf{\beta}} , \leq_{\Psi\change\alpha} ) $.
	Because of $ \omega_2 <_{\Psi\change\alpha} \omega_1 $ we have $ \modelsOfES{\Psi\change\alpha\change\beta} = \modelsOfES{\Psi\change\alpha}\cup\{ \omega_2 \} $.
	Now let $ \gamma=\gamma'\lor\alpha $, where $ \gamma' $ is a formula such that $ \modelsOf{\gamma'}=\modelsOfES{\Psi\change\alpha} \cup \{ \omega_2 \} $.
	By definition of $ \gamma $ it is the case that $ \Psi\change\alpha\change\beta\models \gamma $. 
	Furthermore, by definition $ \alpha\models\gamma $ and $ \omega_1\not\models\gamma$ and $ \omega_2\models\gamma $.
	By using \eqref{pstl:C11} we can conclude $ \Psi\change\beta \models \gamma $.
Note that by Equation \eqref{eq:repr_es_contraction} it must be the case that $ \omega_1\in \modelsOfES{\Psi\change\beta} $ or $ \omega_2\in \modelsOfES{\Psi\change\beta} $.
Since $ \Psi\change\beta \models \gamma $ the former is not possible, so by \eqref{pstl:FA2} we have $ \omega_2 <_\Psi \omega_1 $. \qedhere
\end{description}
\endgroup \end{proof}
 
\subsection{Extended Representation Theorem for Iterative Contraction}
\label{sec:extended_representation}

We now employ the results from Section \ref{sec:relative_change_itr} and Section \ref{sec:itr_contractionals_1011} to show our main theorem for our new sets of postulates, which are summarised in Figure \ref{fig:postulate_overview}.

\begin{theorem}[Extended Representation Theorem]\label{thm:contraction_ext_representation}
	Let $ \change $  be an AGM contraction operator for epistemic states. Then the following statements are equivalent:
\begin{enumerate}[(a)]
	\item The operator $ \change $ fulfils \eqref{pstl:C8} to \eqref{pstl:C11}.
	\item The operator $ \change $ fulfils \eqref{pstl:C8cond} to \eqref{pstl:C11cond}.
	\item The operator $ \change $ fulfils \eqref{pstl:KPP8} to \eqref{pstl:KPP11}.
	\item There is a faithful assignment $ \Psi\mapsto\leq_{\Psi} $ related to $ \change $ by \eqref{eq:repr_es_contraction} such that \eqref{pstl:CR8} to \eqref{pstl:CR11} are satisfied.%
\end{enumerate}
\end{theorem}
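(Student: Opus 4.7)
Most of the work has already been done in the preceding propositions; the plan is therefore to assemble the four statements into an equivalence cycle by chaining together Propositions~\ref{prop:it_es_contraction},~\ref{prop:eqrelated_c8_c9},~\ref{cor:c8_c9_conditional},~\ref{cor:c10_c11_conditional}, and~\ref{prop:eqrelated_c10_c11}, while making sure that one common faithful assignment witnesses all the semantic constraints.

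The equivalence (a) $\Leftrightarrow$ (b) is immediate from Propositions~\ref{cor:c8_c9_conditional} and~\ref{cor:c10_c11_conditional}, which establish the postulate-by-postulate equivalence of each \eqref{pstl:C8}--\eqref{pstl:C11} with its conditional counterpart \eqref{pstl:C8cond}--\eqref{pstl:C11cond} for an AGM contraction operator on epistemic states. The equivalence (c) $\Leftrightarrow$ (d) is exactly Proposition~\ref{prop:it_es_contraction}. Thus the real task is to prove (a) $\Leftrightarrow$ (d).

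For (a) $\Rightarrow$ (d), since $\change$ is an AGM contraction operator for epistemic states, Proposition~\ref{prop:es_contraction} yields a faithful assignment $\Psi \mapsto \leq_\Psi$ related to $\change$ by \eqref{eq:repr_es_contraction}. Because \eqref{pstl:C8} and \eqref{pstl:C9} hold, the (a) $\Rightarrow$ (b) direction of Proposition~\ref{prop:eqrelated_c8_c9} gives that this assignment satisfies \eqref{pstl:CR8} and \eqref{pstl:CR9}. Since \eqref{pstl:C1}--\eqref{pstl:C9} are in force and \eqref{pstl:C10} and \eqref{pstl:C11} are also assumed, the (a) $\Rightarrow$ (b) direction of Proposition~\ref{prop:eqrelated_c10_c11} yields that this same assignment additionally satisfies \eqref{pstl:CR10} and \eqref{pstl:CR11}. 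Conversely, (d) $\Rightarrow$ (a) follows by dualising: given a faithful assignment satisfying \eqref{pstl:CR8}--\eqref{pstl:CR11}, Propositions~\ref{prop:eqrelated_c8_c9} and~\ref{prop:eqrelated_c10_c11} in their (b) $\Rightarrow$ (a) directions provide \eqref{pstl:C8}--\eqref{pstl:C9} and \eqref{pstl:C10}--\eqref{pstl:C11}, respectively.

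The main obstacle is to be sure that Propositions~\ref{prop:eqrelated_c8_c9} and~\ref{prop:eqrelated_c10_c11} can be made to speak about the \emph{same} faithful assignment, rather than merely asserting that \emph{some} assignment exists in each case. This is settled by inspecting Part~II of the proof of Proposition~\ref{prop:eqrelated_c8_c9}: the only property of the assignment used there is that it arises from Proposition~\ref{prop:es_contraction} and satisfies \eqref{eq:repr_es_contraction}. The same is true of Part~II of the proof of Proposition~\ref{prop:eqrelated_c10_c11}. Hence any faithful assignment related to $\change$ via \eqref{eq:repr_es_contraction} automatically satisfies \eqref{pstl:CR8}--\eqref{pstl:CR11} once \eqref{pstl:C8}--\eqref{pstl:C11} are assumed, which validates the cycle and completes the theorem.
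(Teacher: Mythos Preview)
Your proof is correct and follows essentially the same approach as the paper: both derive (a)\,$\Leftrightarrow$\,(b) from Propositions~\ref{cor:c8_c9_conditional} and~\ref{cor:c10_c11_conditional}, (c)\,$\Leftrightarrow$\,(d) from Proposition~\ref{prop:it_es_contraction}, and (a)\,$\Leftrightarrow$\,(d) from Propositions~\ref{prop:eqrelated_c8_c9} and~\ref{prop:eqrelated_c10_c11}. Your additional remark about the two propositions speaking of the \emph{same} faithful assignment is a worthwhile clarification that the paper leaves implicit.
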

\begin{proof}
The equivalence of (c) and (d) is given by Proposition \ref{prop:it_es_contraction}. By Proposition \ref{cor:c10_c11_conditional} and Proposition \ref{cor:c8_c9_conditional} we get the equivalence between (a) and (b).
Finally, by Proposition \ref{prop:eqrelated_c8_c9}  and Proposition \ref{prop:eqrelated_c10_c11} the statements (a) and (d) are equivalent.
\end{proof}

Note that we could extend Theorem \ref{thm:contraction_ext_representation} to cover also the syntactic postulates for iterated contractions by Chopra, Ghose, Meyer and Wong \cite{KS_ChopraGhoseMeyerWong2008}.
Their contraction postulates depend on a revision; this complicates specifying a class of operators, since for instance in the iterative case there are more revisions than contractions \cite{KS_KoniecznyPinoPerez2017}.

By Theorem \ref{thm:contraction_ext_representation} the class of AGM contraction operators which fulfil the iteration postulates 
\eqref{pstl:KPP8} to \eqref{pstl:KPP11}
for contraction can be expressed equivalently by any of the two groups of postulates \eqref{pstl:C8} to \eqref{pstl:C11} and \eqref{pstl:C8cond} to \eqref{pstl:C11cond} developed here in this paper and summarised in Figure \ref{fig:postulate_overview}.
 
\section{Conclusion and Future Work}
\label{sec:conclusion}
We take a conditional perspective on iterated contraction by developing a set of postulates \eqref{pstl:C8cond} to \eqref{pstl:C11cond}, which highlight conditional beliefs that are retained by an iterative contraction.
Additionally, we provide a set of syntactic postulates \eqref{pstl:C8} to \eqref{pstl:C11} for iterated contraction which are more succinct than \eqref{pstl:KPP8} to \eqref{pstl:KPP11} proposed in \cite{KS_KoniecznyPinoPerez2017}.
Moreover, without \eqref{pstl:C8} to \eqref{pstl:C11} it would be difficult to obtain \eqref{pstl:C8cond} to \eqref{pstl:C11cond}.
We proved an extended representation theorem for iterated contraction, which shows that all these different sets of postulates describe the same set of operators.

The notion of $ \alpha $-equivalence was introduced as a form of equivalence between epistemic states with respect to a proposition $ \alpha $. 
We showed the usefulness of this relation for postulation and how it allows to provide insight about invariants in belief change.

For the postulates \eqref{pstl:C8cond} to \eqref{pstl:C11cond} we use specific conditionals, called contractionals, which are connected to contractions in the same manner as the Ramsey test draws a connection to revisions.
Contractionals have been studied in the context of inference by Bochman \cite{KS_Bochman2001}. 
Furthermore, we showed that $ \alpha $-equivalence is connected to the acceptance of contractionals.
To the best of our knowledge, contractionals have not been used in the context of iterated revision so far.

In future work we will explore the connection between Ramsey test conditionals and contractionals in the setting of iterated belief change;
for this, the recent work by Booth and Chandler on the correspondence between iterated revision and contraction  \cite{KS_BoothChandler2019}, employing closure operators over conditionals \cite{KS_ChandlerBooth2019}, will be useful.

\bibliographystyle{ecai}
\bibliography{bibexport}

\end{document}